\providecommand{\keywords}[1]{\textbf{\textit{Index terms---}} #1}
\newtheorem{theorem}{Theorem}
\begin{document}

\title{String Tightening as a Self-Organizing Phenomenon:\\Computation of Shortest Homotopic Path, Smooth Path, and Convex Hull}
\author{Bonny Banerjee\footnote{This research was supported by participation in the Advanced Decision Architectures Collaborative Technology Alliance sponsored by the U.S. Army Research Laboratory under Cooperative Agreement DAAD19-01-2-0009. This work was done while the author was with the Laboratory for Artificial Intelligence Research, Department of Computer Science \& Engineering, The Ohio State University, Columbus, OH 43210, USA. An earlier version of this article was published as \cite{Banerjee2007}.\newline\indent E-mail: BonnyBanerjee@yahoo.com}, Ph.D.}

\maketitle

\begin{abstract}
The phenomenon of self-organization has been of special interest to the neural network community for decades. In this paper, we study a variant of the Self-Organizing Map (SOM) that models the phenomenon of self-organization of the particles forming a string when the string is tightened from one or both ends. The proposed variant, called the String Tightening Self-Organizing Neural Network (STON), can be used to solve certain practical problems, such as computation of shortest homotopic paths, smoothing paths to avoid sharp turns, and computation of convex hull. These problems are of considerable interest in computational geometry, robotics path planning, AI (diagrammatic reasoning), VLSI routing, and geographical information systems. Given a set of obstacles and a string with two fixed terminal points in a two dimensional space, the STON model continuously tightens the given string until the unique shortest configuration in terms of the Euclidean metric is reached. The STON minimizes the total length of a string on convergence by dynamically creating and selecting feature vectors in a competitive manner. Proof of correctness of this anytime algorithm and experimental results obtained by its deployment are presented in the paper.
\end{abstract}

\keywords{Path planning, homotopy, shortest path, convex hull, smooth path, self organization, neural network.}


\section{Introduction}
\label{Introduction}

Self-organization, as a phenomenon, has received
considerable attention from the neural network community in the
last couple of decades. Several attempts have been made to use
neural networks to model different self-organization phenomena.
One of the most well known of such attempts is that of Kohonen's
who proposed the Self-Organizing Map (SOM) \cite{Kohonen2001}
inspired by the way in which various human sensory impressions are
topographically mapped into the neurons of the brain. SOM
possesses the capability to extract features from a
multidimensional data set by creating a vector quantizer by
adjusting weights from common input nodes to M output nodes
arranged in a two dimensional grid. At convergence, the weights
specify the clusters or vector centers of the set of input vectors
such that the point density function of the vector centers tend to
approximate the probability density function of the input vectors.
Several authors in different contexts reported different dynamic
versions of SOM
\cite{Kohonen2001,Kangasetal1990,Fritzke1994,ChoiPark1994,Schweizeretal1991,Obermayeretal1990,FavataWalker1991,RitterKohonen1989,KusumotoTakefuji2006,Fritzke1997}.

In this paper, assuming a string is composed of a sequence of
particles, we claim that the phenomenon undergone by the particles
of the string, when the string is pulled from one or both ends to
tighten it, is that of self-organization, by modeling the
phenomenon using a variant of SOM, called the String Tightening
Self-Organizing Neural Network (STON). We further use the proposed
variant to solve a few well-known practical problems - computation
of shortest path in a given homotopy class, smoothing paths to
avoid sharp turns, and computation of convex hull. Other than
theoretical considerations in computational geometry
\cite{Mitchell2000}, computation of shortest homotopic paths is of
considerable interest in robotics path-planning
\cite{Chosetetal2005}, AI (diagrammatic reasoning)
\cite{Chandra2002,banerjee2003constructing,Chandra2004,Chandra2005,Banerjee2006,BanerjeeChandraPathASC2006,BanerjeePhDthesis2007,chandra2009diagrammatic,BanerjeeChandra2010,BanerjeeChandra2010JAIR,BanerjeeChandra2012}, VLSI routing \cite{Gaoetal1988}, and geographical information systems. Smooth paths are required for
navigation of large robots incapable of taking sharp turns, and
also for handling unexpected obstacles. To generate a path that is
smooth, shorter, collision-free, and is homotopic to the original
path requires generation of the configuration space of a robot
which is computationally expensive and difficult to represent
\cite{Quinlan1993}. Computation of convex hull finds numerous
applications in computational geometry algorithms, pattern
recognition, image processing, and so on. The aim of this paper is
to study the properties of STON and how it might be applied to
solve some practical problems as described above.

The remainder of this paper is organized as follows. In the next section, the STON algorithm is described assuming the given string is sampled at a frequency of at least $d/2$ where $d$ is the minimum distance between the obstacles. Thereafter, an analysis of the algorithm is presented along with proof of its important properties and correctness. Section \ref{Computation of shortest homotopic paths} discusses how STON might be extended when the above constraint on sampling is not met. The extension is used for computation of shortest path in a given homotopy class. Proof of correctness and complexity analysis of the extension are also included. Finally, simulation results are presented from computing shortest homotopic path, smooth path and convex hull using both the original and extended algorithms. The paper concludes with a general discussion.

\section{The STON Algorithm}
\label{The STON Algorithm}

\subsection{Homotopy}
\label{Homotopy}

A string $\pi$ in two dimensional space $(\Re^{2})$ might be
defined as a continuous mapping $\pi:[0,1]\rightarrow \Re^{2}$,
where $\pi(0)$ and $\pi(1)$ are the two terminal points of the
string. A string is simple if it does not intersect itself,
otherwise it is non-simple. Let $\pi_{1}$ and $\pi_{2}$ be two
strings in $\Re^{2}$ sharing the same terminal points i.e.
$\pi_{1}(0)=\pi_{2}(0)$ and $\pi_{1}(1)=\pi_{2}(1)$, and avoiding
a set of obstacles $P\subset \Re^{2}$. The strings $\pi_{1}$,
$\pi_{2}$ are considered to be homotopic to each other or to
belong to the same homotopy class, with respect to the set of
obstacles $P$, if there exists a continuous function
$\Psi:[0,1]\times[0,1]\rightarrow \Re^{2}$ such that \\1.
$\Psi(0,t)=\pi_{1}(t)$ and $\Psi(1,t)=\pi_{2}(t)$, for $0\leq
t\leq 1$
\\2. $\Psi(\lambda,0)=\pi_{1}(0)=\pi_{2}(0)$ and $\Psi(\lambda,1)=\pi_{1}(1)=\pi_{2}(1)$, for $0\leq \lambda \leq 1$ \\3. $\Psi(\lambda,t)\notin P$, for $0\leq \lambda \leq 1$ and $0\leq t \leq 1$. \\Informally, two strings are considered
to be homotopic with respect to a set of obstacles, if they share
the same terminal points and one can be continuously deformed into
the other without crossing any obstacle. Thus homotopy is an
equivalence relation.

Given a string $\pi_{i}$, specified in terms of sampled points,
and a set of obstacles $P$, STON computes a string $\pi_{s}$ such
that $\pi_{i}$ and $\pi_{s}$ belong to the same homotopy class,
and the Euclidean distance covered by $\pi_{s}$ is the shortest
among all strings homotopic to $\pi_{i}$. It is noteworthy that
$\pi_{s}$ is unique and has some canonical form
\cite{GrigorievSlissenko1998}.

\subsection{The Objective}
\label{The Objective}

Assume a string wound around obstacles in $\Re^{2}$ with two fixed
terminal points. A shorter configuration of the string can be
obtained by pulling its terminals. The unique shortest
configuration can be obtained by pulling its terminals until they
cannot be pulled any more. The proposed algorithm models this
phenomenon as a self-organized mapping of the points forming a
given configuration of a string into points forming the desired
shorter configuration of the string. Let us consider a set of $n$
data points or obstacles, $P=\{p_{1},p_{2},...p_{n}\}$,
representing the input signals, and a sequence of variable (say,
$k$) processors $\langle q_{1},q_{2},...q_{k} \rangle$ each of
which (say $q_{i}$) is associated with a weight vector $w_{i}(t)$
at any time $t$. A weight vector represents the position of its
processor in $\Re^{2}$. If the $k$ processors are placed on a
string in $\Re^{2}$, the STON is an anytime algorithm for tuning
the corresponding weights to different domains of the input
signals such that, on convergence, the processors will be located
in such a way that they minimize a distance function, $\phi(w)$,
given by

\begin{equation}
\label{Distance Function} \phi(w)=\displaystyle\sum_{i=1}^{k-1}
\|w_{i+1}(t)-w_{i}(t)\|^{2}
\end{equation}

\noindent where $q_{i}$, $q_{i+1}$ are two consecutive processors
on the string with corresponding weights $w_{i}(t)$, $w_{i+1}(t)$
at any time $t$. The algorithm further guarantees that the final
configuration of the string formed by the sequence of processors
at convergence lies in the same homotopy class as the string
formed by the initial sequence of processors with respect to $P$.
Thus, assuming fixed $q_{1}$ and $q_{k}$, the STON defines the
shortest configuration of the $k$ processors in an unsupervised
manner. The phenomenon undergone by the particles forming the
string is modeled by the processors in the neural network.

\subsection{Initialization of the Network}
\label{Initialization of the Network}

The STON is initialized with a given number of connected
processors, the weight corresponding to each of which is
initialized at a unique point on the given configuration of a
string. A feature vector, presented to the STON, is an attractor
point in $\Re^{2}$ and is either created dynamically or chosen
selectively from the given set of input obstacles $P$. The weight
vectors are updated iteratively on the basis of the created and
chosen feature space $S(t)$;
$S(t)=\{x_{1}(t),x_{2}(t),...x_{k}(t)\}$ being the set of feature
vectors at any time $t$. It is noteworthy that $x_{i}(t)$ is not
necessarily unique. Unlike SOM and many of its variants,
randomized updating of weights does not yield better results in
the STON model; updating weights sequentially or randomly both
yield the same result in terms of output quality as well as
computation time. On convergence, the location of the processors
representing the unique shortest configuration homotopic to the
given configuration of the string is obtained.

\subsection{Creating/Choosing Feature Vectors}
\label{Creating/Choosing Feature Vectors}

A feature vector $x_{i}(t)$ is created if the triangular area
spanned by three consecutive processors $q_{m-1}$, $q_{m}$,
$q_{m+1}$ does not contain any obstacle $p_{j}\in P$. In that
case,

\begin{equation}
\label{Create Feature Vector}
x_{i}(t)=\frac{w_{m-1}(t')+w_{m+1}(t'')}{2}
\end{equation}

\noindent where $t'$, $t''$ assumes the value $t$ if the weight
has not yet been updated in the current iteration or \emph{sweep}
and the value $t+1$ if the weight has been updated in the current
sweep, and $1<m<k$. If an obstacle, say $p_{j}\in P$, lies within
the triangular area spanned by the three consecutive processors
$q_{m-1}$, $q_{m}$, $q_{m+1}$, the feature vector $x_{i}(t)$ is
chosen to be

\begin{equation}
\label{Choose Feature Vector} x_{i}(t)=p_{j}
\end{equation}

For this algorithm, we assume the given string is sampled such
that there cannot exist multiple non-identical obstacles within
the triangular area spanned by any three consecutive processors
(see Appendix \ref{A Finite Sampling Theorem} for how to achieve
such sampling).

\subsection{Updating Weights}
\label{Updating Weights}

The STON evolves by means of a certain processor evolution
mechanism, given by \cite{Rosenblatt1958}

\begin{equation}
\label{Weight Updating Equation}
w_{m}(t+1)=w_{m}(t)+\alpha(t)[x_{j}(t)-w_{m}(t)]
\end{equation}

\noindent where $\alpha(t)$ is the gain term or learning rate
which might vary with time and $0 \leq \alpha(t) \leq1$.
$\alpha(t)$ might be unity only when feature vectors are created
according to eq. \ref{Create Feature Vector}. All the weight
vectors are updated exactly once in a single sweep, indexed by
$t$.

If modification of weights is continued in this process, the processors tend to produce shorter configurations at the end of
each sweep. The weight vectors converge when

\begin{equation}
\label{Convergence Criteria} \|w_{i+1}(t)-w_{i}(t)\| < \epsilon,
\forall i
\end{equation}

\noindent where $\epsilon$ is a predetermined sufficiently small positive scalar quantity.

\section{Analysis of STON}
\label{Analysis of STON}

STON is a variant of SOM. As in SOM, each feature vector in STON
pulls the selected processors in a neighborhood towards itself as
a result of updating in a topologically constrained manner,
ultimately leading to an ordering of processors that minimizes
some residual error. Neighborhood, in SOM, is something like a
smoothing kernel over a two dimensional grid often taken as the
Gaussian which shrinks over time. In STON, all those processors
are included within the neighborhood of a feature vector that form
triangles with their adjacent processors such that the feature
vector lies within their triangles. Such a neighborhood is conceptually similar to that proposed in \cite{BerglundSitte2006} where the neighborhood is not dependent on time but on the nature of the input signals. STON incorporates competitive
learning as the weights adapt themselves to specific chosen
features of the input signal defined in terms of the obstacles.
The residual error is defined in SOM in terms of variance while in
STON, in terms of Euclidean distance. From the inputs the net
adapts itself dynamically in an unsupervised manner to acquire a
stable structure at convergence, thereby manifesting
self-organization \cite{DeWolfHolvoet2005}. The STON possesses
certain key properties which are discussed in this section that
eventually leads to the proof of correctness of the algorithm.

\begin{theorem}\label{Theorem:initial and final sequence of processors are homotopic}
The configuration of a string formed by the sequence of processors at initialization and the same at convergence are homotopic.
\end{theorem}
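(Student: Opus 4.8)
The plan is to reduce the statement to a single elementary weight update and then chain the resulting homotopies together. Regard the ``configuration of a string formed by the sequence of processors'' as the polygonal path $\gamma$ obtained by joining consecutive weight vectors $w_{1},\dots,w_{k}$ by straight segments. The algorithm transforms $\gamma$ by a finite sequence of elementary updates --- one per interior processor per sweep, and finitely many sweeps since the algorithm halts once eq.~\ref{Convergence Criteria} holds --- each governed by eq.~\ref{Weight Updating Equation}, which replaces exactly one interior weight $w_{m}(t)$ ($1<m<k$) by $w_{m}(t+1)=(1-\alpha(t))\,w_{m}(t)+\alpha(t)\,x_{j}(t)$, a point on the segment from $w_{m}(t)$ toward the feature vector $x_{j}(t)$, while every other processor, in particular the terminals $q_{1},q_{k}$, stays fixed. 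Since homotopy with respect to $P$ is an equivalence relation, it suffices to prove, by induction on the number of updates, that each elementary update sends the current obstacle-free polygonal path to an obstacle-free path homotopic to it; the base case is the hypothesis that $\pi_{i}$ avoids $P$, and transitivity then delivers the theorem.

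For one elementary update I would exhibit the homotopy explicitly: hold every processor except $q_{m}$ fixed and let $q_{m}$ move linearly, $q_{m}(s)=(1-s)\,w_{m}(t)+s\,w_{m}(t+1)$ for $s\in[0,1]$, with the two incident segments $w_{m-1}\,q_{m}(s)$ and $q_{m}(s)\,w_{m+1}$ deforming along. This visibly defines a continuous $\Psi:[0,1]\times[0,1]\to\Re^{2}$ fixing the terminal points, so conditions~1 and~2 of the definition of homotopy are immediate, and only condition~3 --- that the image of $\Psi$ avoids $P$ --- needs work. The image consists of the unmoving sub-path of $\gamma$, which avoids $P$ by the induction hypothesis, together with the region swept by the two moving segments; a short barycentric computation identifies this latter region as exactly the union of the triangles $\triangle\,w_{m-1}\,w_{m}(t)\,w_{m}(t+1)$ and $\triangle\,w_{m+1}\,w_{m}(t)\,w_{m}(t+1)$. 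In both possible cases of feature-vector selection, $x_{j}(t)$ lies inside the triangle $T:=\triangle\,w_{m-1}\,w_{m}(t)\,w_{m+1}$ --- it is the midpoint of $w_{m-1}w_{m+1}$ by eq.~\ref{Create Feature Vector}, or an obstacle $p_{j}$ interior to $T$ by eq.~\ref{Choose Feature Vector} --- so by convexity the whole segment $w_{m}(t)\,x_{j}(t)$, hence $w_{m}(t+1)$ and the entire swept region, lies inside $T$.

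It remains to show the swept region contains no obstacle. If the feature vector was created via eq.~\ref{Create Feature Vector}, then $T$ is obstacle-free by definition of that case, so the smaller swept region is too, and both the homotopy and the obstacle-freeness of the new path follow. If the feature vector was chosen as an obstacle $p_{j}$ via eq.~\ref{Choose Feature Vector}, then by the sampling assumption (Appendix~\ref{A Finite Sampling Theorem}) $p_{j}$ is the \emph{only} obstacle inside $T$, so it is the single point to exclude. Here the constraint that $\alpha(t)$ may equal unity only for created feature vectors gives $0\le\alpha(t)<1$; if $\alpha(t)=0$ the update is the identity, and if $0<\alpha(t)<1$ then $w_{m}(t+1)$ is a proper interior point of the segment $w_{m}(t)\,p_{j}$, so $p_{j}$ lies on the ray from the vertex $w_{m}(t)$ through the vertex $w_{m}(t+1)$ \emph{strictly beyond} $w_{m}(t+1)$; since each swept triangle is convex with $w_{m}(t)\,w_{m}(t+1)$ as an edge, that ray cannot re-enter it past the vertex $w_{m}(t+1)$, so $p_{j}$ lies outside both swept triangles. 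Hence the update preserves obstacle-freeness and the homotopy class, completing the induction.

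I expect this last geometric observation --- that a \emph{chosen} obstacle escapes the swept region precisely because the learning rate is strictly below one --- to be the only subtle point; everything else is bookkeeping with the definition of homotopy, the barycentric description of the swept region, and transitivity. One should also note the degenerate subcase in which $w_{m-1},w_{m}(t),w_{m+1}$ are collinear, where $T$ and the swept region collapse onto a subsegment of $w_{m-1}w_{m+1}$ and the same avoidance argument applies unchanged.
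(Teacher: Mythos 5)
Your proof is correct, and while it follows the paper's overall skeleton (reduce to single-weight updates, then chain by transitivity of homotopy), the key geometric step is genuinely different. The paper argues obstacle-by-obstacle: for an obstacle $p_{j}$ inside the triangle of $q_{i-1}$, $q_{i}$, $q_{i+1}$ it shows (i) that updating $q_{i}$ keeps it in the region bounded by the extensions of the lines joining $q_{i-1}$ and $q_{i+1}$ to $p_{j}$ (Fig.~\ref{Fig1}), so the incident segments never cross $p_{j}$, and (ii) that updating a neighbor such as $q_{i+1}$ also cannot cross $p_{j}$, because that neighbor's feature vector is confined to the partition shown in Fig.~\ref{Fig2}. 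You argue update-by-update instead: you exhibit the homotopy explicitly, identify its image as the union of the triangles $\triangle\,w_{m-1}w_{m}(t)w_{m}(t+1)$ and $\triangle\,w_{m+1}w_{m}(t)w_{m}(t+1)$, observe that this union lies inside $T=\triangle\,w_{m-1}w_{m}(t)w_{m+1}$ because the feature vector of eq.~\ref{Create Feature Vector} or \ref{Choose Feature Vector} does, and then eliminate obstacles in two strokes: anything outside $T$ is never swept, and the unique obstacle inside $T$ (unique by the sampling assumption) lies on the ray from $w_{m}(t)$ through $w_{m}(t+1)$ strictly beyond $w_{m}(t+1)$ since $\alpha(t)<1$ in the chosen case, hence outside both swept triangles. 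This buys you two things: the paper's condition (ii) comes for free, since an obstacle sitting in a neighboring processor's triangle but not in the updated processor's own triangle simply cannot be in the swept region; and the role of the constraint that $\alpha(t)$ may equal unity only for created feature vectors---which the paper leaves implicit---becomes explicit and load-bearing. What the paper's version buys in exchange is a proof closer to the informal intuition of ``never crossing'' a designated obstacle, without needing the swept-region description; your degenerate collinear subcase is also handled consistently, since an obstacle on the current polygonal path is already excluded by the induction hypothesis.
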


\begin{proof}
We start by noting that, given a fixed set of obstacles and fixed
terminal points, the homotopy class of a string can be altered
only by crossing any obstacle. The configuration of a string
formed by the sequence of processors at any time $t$ is obtained
by updating the weights with respect to the selected feature
vectors at time $t$-1, the feature vectors being selected
according to eq. \ref{Create Feature Vector} or \ref{Choose
Feature Vector}. In the first case, creation of a feature vector
and updating the weight does not change the homotopy class of the
string as there was no obstacle in the triangular area, hence
updating the weight did not result in crossing any obstacle.

In the second case, a feature vector is selected by eq.
\ref{Choose Feature Vector} from the set of obstacles. A processor
is pulled towards the feature vector by updating its weight. It
requires to be proven that by such selection of feature vectors
and updating of weights, a string cannot cross any obstacle. Let
$q_{i-1}$, $q_{i}$, $q_{i+1}$ be three consecutive processors on a
string with corresponding weights being $w_{i-1}(t')$, $w_{i}(t)$,
$w_{i+1}(t'')$; $t',t'' \in \{t,t+1\}$, and $x_{i}(t)=p_{j}$ be
the selected feature vector (see Fig. \ref{Fig1}). In order to
complete the proof we need to show $p_{j}$ will never be crossed
if -- (i) weight for processor $q_{i}$ is updated, and (ii) weight
for one of the neighbors of $q_{i}$ (i.e. $q_{i-1}$ or $q_{i+1}$)
is updated.

\begin{figure}
\centering
        \includegraphics[width=0.45\textwidth]{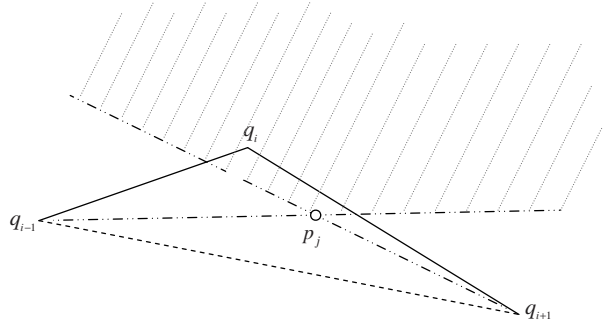}
    \caption{If $p_{j}$ lies inside the triangle formed by $q_{i-1}$, $q_{i}$, $q_{i+1}$, then $q_{i}$ has to lie in the shaded region.}
        \label{Fig1}
\end{figure}

First, note that in order for $p_{j}$ to be inside the triangle
formed by $q_{i-1}$, $q_{i}$, $q_{i+1}$, the processor $q_{i}$ has
to be in the region bounded by extensions of the lines joining
$q_{i-1}$ and $q_{i+1}$ to $p_{j}$ (i.e. the shaded region in Fig.
\ref{Fig1}). Since updating the weight for processor $q_{i}$ pulls
it towards the obstacle $p_{j}$ along a straight line, $q_{i}$
continues to remain within the shaded region after updating,
thereby never letting the segments $\overline{q_{i-1}q_{i}}$ and
$\overline{q_{i}q_{i+1}}$ cross the obstacle $p_{j}$. That proves
condition (i).

\begin{figure}
\centering
        \includegraphics[width=0.45\textwidth]{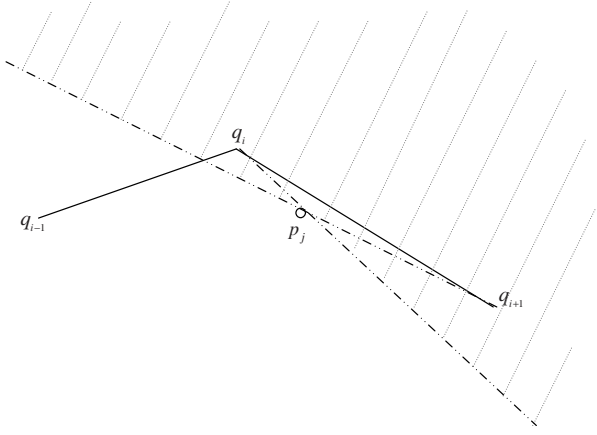}
    \caption{The feature vector for updating the weight for $q_{i+1}$, if not $p_{j}$, can lie only in the shaded region.}
        \label{Fig2}
\end{figure}

Now let us consider the case when the weight for a neighbor of
$q_{i}$, say $q_{i+1}$ without loss of generality, is updated. Let
$q_{i+2}$ be the other neighbor of $q_{i+1}$. Then either the
obstacle $p_{j}$ lies inside the triangle formed by $q_{i}$,
$q_{i+1}$, $q_{i+2}$, or it does not. If $p_{j}$ lies inside, it
is the selected feature vector that pulls $q_{i+1}$ towards itself
and it will never be crossed (due to condition (i)). If $p_{j}$
does not lie inside, then either there lies some other obstacle,
say $p_{j'}$, inside the triangle formed by $q_{i}$, $q_{i+1}$,
$q_{i+2}$, or there does not exist any obstacle inside. In the
former case, $p_{j'}$ is the selected feature vector while in the
later, the feature vector is created according to eq. \ref{Create
Feature Vector}. In any case, the feature vector can lie only in
the partition, bounded by the extensions of the lines joining
$q_{i}$ and $q_{i+1}$ to $p_{j}$, in which $q_{i+1}$ lies (shaded
region in Fig. \ref{Fig2}). Thus, updating the weight for
$q_{i+1}$ will not make the string cross the obstacle $p_{j}$. In
general, updating the neighbors of $q_{i}$ will not make the
string cross $p_{j}$, proving condition (ii).

From this we conclude that updating a weight vector with respect
to a created or selected feature vector does not change the
homotopy class of a string. Hence, the configurations of a string
at the end of consecutive sweeps are homotopic. But homotopy is a
transitive relation. This concludes the proof that the
configurations of a string at initialization and at convergence
are homotopic.
\end{proof}

\begin{theorem}\label{Theorem:length of string decreases with each iteration}
The Euclidean distance covered by the configuration of a string formed by the sequence of processors at time $t+1$ is less than the same at time $t$.
\end{theorem}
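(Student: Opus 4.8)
The plan is to run the same local, one-processor-at-a-time bookkeeping used for Theorem~\ref{Theorem:initial and final sequence of processors are homotopic}, but now tracking the Euclidean distance covered by the string, i.e.\ the length
\[
L(t)=\sum_{i=1}^{k-1}\bigl\|w_{i+1}(t)-w_i(t)\bigr\|
\]
of the polyline through the processors. First I would observe that $q_1$ and $q_k$ are never moved and that updating an interior processor $q_m$ changes only the two incident segments, so the change in $L$ produced by one update equals $g(z')-g(w_m)$, where $g(z)=\|z-a\|+\|z-b\|$, the points $a,b$ are the \emph{current} positions of $q_{m-1},q_{m+1}$ at the instant $q_m$ is updated, and $z'=(1-\alpha)w_m+\alpha x$ is the new position of $q_m$ by eq.~\ref{Weight Updating Equation}, with $\alpha=\alpha(t)>0$ and $x$ the feature vector. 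Convexity of $g$ gives
\[
g(z')\ \le\ (1-\alpha)\,g(w_m)+\alpha\,g(x),
\]
so the whole problem reduces to showing $g(x)\le g(w_m)$ and identifying when the inequality is strict.

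I would then split into the two ways a feature vector arises. If $x$ is \emph{created} by eq.~\ref{Create Feature Vector}, then $x=(a+b)/2$ and $g(x)=\|a-b\|$ is the global minimum of $g$; hence $g(x)\le g(w_m)$, with equality exactly when $w_m$ lies on the segment $\overline{ab}$, i.e.\ when $q_{m-1},q_m,q_{m+1}$ are collinear. If $x=p_j$ is \emph{chosen} by eq.~\ref{Choose Feature Vector}, then $p_j$ lies in the interior of the (hence nondegenerate) triangle with vertices $a$, $w_m$, $b$ (Fig.~\ref{Fig1}). Writing a point $z$ of that triangle in barycentric coordinates $z=\lambda_a a+\lambda_{w_m}w_m+\lambda_b b$ and using convexity of $g$ together with $g(a)=g(b)=\|a-b\|$ gives $g(z)\le(\lambda_a+\lambda_b)\|a-b\|+\lambda_{w_m}g(w_m)$; since $\|a-b\|<g(w_m)$ for a nondegenerate triangle and $\lambda_a+\lambda_b>0$ whenever $z\neq w_m$, this forces $g(p_j)<g(w_m)$. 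The convexity bound then yields $g(z')<g(w_m)$: dragging $q_m$ toward an obstacle inside its triangle strictly shortens the string. (Geometrically, that triangle, minus the apex $w_m$, lies strictly inside the ellipse with foci $a,b$ through $w_m$.)

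Summing the per-update inequalities $g(z')\le g(w_m)$ across a sweep gives $L(t+1)\le L(t)$. To get strictness, I would argue that if every update in the sweep were an equality, then by the two cases every interior processor is collinear with its neighbors and no triangle contains an obstacle, so the polyline coincides with the straight segment $\overline{q_1q_k}$ --- which has the least possible length and is the unique shortest (converged) configuration. Hence, as long as the string is not yet this straight segment, at least one update is strict and $L(t+1)<L(t)$, which is the claim. The step I expect to be the main obstacle is the ``chosen-obstacle'' case: it is not obvious a priori that pulling a processor toward an obstacle lying inside its own triangle makes the string shorter rather than longer, and the decisive fact is the elementary observation that $g$, restricted to that triangle, attains its maximum only at the apex $w_m$.
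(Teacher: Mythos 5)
Your argument is correct and establishes the same local inequality as the paper, but by a genuinely different route. The paper's proof is synthetic: it uses the fact that the updated position $w_i(t+1)$ lies in the triangle spanned by $w_{i-1}(t')$, $w_i(t)$, $w_{i+1}(t'')$ (because the feature vector does), extends the segment $\overline{w_{i-1}(t')\,w_i(t+1)}$ until it meets the opposite side $\overline{w_i(t)\,w_{i+1}(t'')}$ at a point $a$, and applies the triangle inequality twice to conclude that the two incident segment lengths decrease; summing over the sweep gives the theorem. You instead isolate the function $g(z)=\|z-a\|+\|z-b\|$, use its convexity to reduce the update $z'=(1-\alpha)w_m+\alpha x$ to the single comparison $g(x)\le g(w_m)$, and prove the key fact --- that $g$ restricted to the triangle is maximized only at the apex $w_m$ --- via barycentric coordinates; this is exactly the geometric content of the paper's extend-and-intersect construction, packaged analytically. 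Your version makes explicit several things the paper leaves implicit: it treats the created-midpoint case uniformly (including $\alpha=1$, where the paper's construction degenerates because $w_i(t+1)$ lands on the side $\overline{ab}$), it exposes the role of $\alpha(t)>0$, and it addresses when the per-sweep decrease is strict, which the paper does not discuss. One caveat on that last point: your all-equality analysis needs the chosen obstacle to be distinct from $w_m$ and the triangle nondegenerate; if a processor sits exactly on its chosen obstacle (or if $\alpha(t)=0$, which the paper's $0\le\alpha(t)\le1$ permits), the update is the identity and equality occurs without the string being the straight segment $\overline{q_1q_k}$. This is the same nondegeneracy that the paper's strict inequalities silently assume, so it does not place your proof below the paper's own standard of rigor, but it is worth flagging if you want the strict form of the statement to hold verbatim.
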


\begin{proof}
Let us consider a triangle formed by three consecutive processors
$q_{i-1}$, $q_{i}$, $q_{i+1}$ with corresponding weights
$w_{i-1}(t')$, $w_{i}(t)$, $w_{i+1}(t'')$; $t',t''\in \{t,t+1\}$,
on a configuration of a string (see Fig. \ref{Fig3}). Let
$w_{i}(t+1)$ be the weight vector after updating $w_{i}(t)$ with
respect to a feature vector either created according to eq.
\ref{Create Feature Vector} or chosen according to eq. \ref{Choose
Feature Vector}. We are required to prove that the Euclidean
distance covered by a configuration of the string from
$q_{i-1}[w_{i-1}(t')]$ to $q_{i+1}[w_{i+1}(t'')]$ via
$q_{i}[w_{i}(t+1)]$ is less than the same via $q_{i}[w_{i}(t)]$.

In order to prove that, we extend the line segment
$\overline{q_{i-1}[w_{i-1}(t')]q_{i}[w_{i}(t+1)]}$ to intersect
the line segment $\overline{q_{i}[w_{i}(t)]q_{i+1}[w_{i+1}(t'')]}$
at a point, say $A$ located at $a$. Then, from Fig. \ref{Fig3},
using the Triangle Inequality, we get
\\

\begin{math}
\|w_{i-1}(t')-w_{i}(t+1)\| + \|w_{i}(t+1)-a\| < \|w_{i-1}(t')-w_{i}(t)\| \\ + \|w_{i}(t)-a\|
\end{math}
\\

\begin{math}
\|w_{i}(t+1)-w_{i+1}(t'')\| < \|w_{i}(t+1)-a\| + \|a-w_{i+1}(t'')\|
\end{math}
\\

\noindent From the above inequalities, we get
\\

\begin{math}
\|w_{i-1}(t')-w_{i}(t+1)\| + \|w_{i}(t+1)-w_{i+1}(t'')\| <\\
\|w_{i-1}(t')-w_{i}(t)\| + \|w_{i}(t)-a\| + \|a-w_{i+1}(t'')\|
\end{math}
\\

\begin{figure}
\centering
        \includegraphics[width=0.47\textwidth]{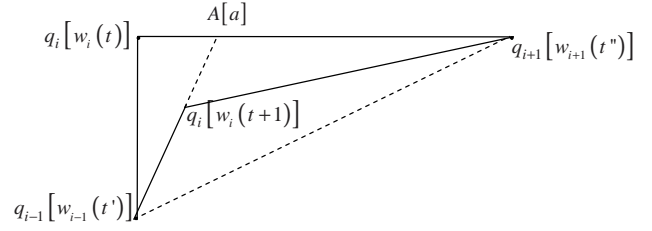}
    \caption{The Euclidean distance covered by a configuration of the
string from $q_{i-1}[w_{i-1}(t')]$ to $q_{i+1}[w_{i+1}(t'')]$ via
$q_{i}[w_{i}(t+1)]$ is less than the same via $q_{i}[w_{i}(t)]$.}
        \label{Fig3}
\end{figure}

\noindent Thus, at any time $t$, after updating a weight $w_{i}(t)$, we have
\\

\begin{math}
\|w_{i-1}(t')-w_{i}(t+1)\| + \|w_{i}(t+1)-w_{i+1}(t'')\| <\\ \|w_{i-1}(t')-w_{i}(t)\| + \|w_{i}(t)-w_{i+1}(t'')\|
\end{math}
\\

\noindent i.e. updating $w_{i}(t)$ contributes to minimization of
the sum of lengths of the segments
$\overline{q_{i-1}[w_{i-1}(t')]q_{i}[w_{i}(t)]}$ and
$\overline{q_{i}[w_{i}(t)]q_{i+1}[w_{i+1}(t'')]}$. Each weight is
updated exactly once in every sweep. Thus updating a weight
contributes to minimization of length of the current configuration
of the string at every sweep, thereby minimizing $\phi(w)$.
\end{proof}

Theorem \ref{Theorem:initial and final sequence of processors are homotopic} shows the STON algorithm guarantees that the final
configuration $\pi_{s}$ of the string belongs to the same homotopy class as its initial configuration $\pi_{i}$. By Theorem \ref{Theorem:length of string decreases with each iteration}, it can be seen that if sufficient number of sweeps are computed, the shortest configuration in the given homotopy class can be reached. Thus the proposed algorithm is correct with respect to the goal of obtaining the shortest homotopic configuration of a string as defined in section \ref{The Objective}. This, however, does not guarantee that the optimum solution will always be reached. It is possible for the algorithm to get stuck at suboptimal solutions, a situation that can be averted by choosing $\alpha$ much less than unity when feature vectors are selected by eq. \ref{Choose Feature Vector}. We shall discuss this issue further in section \ref{Experimental Results}.

\section{Computation of shortest homotopic paths}
\label{Computation of shortest homotopic paths}

The solution to the problem of computing the shortest homotopic
path can be viewed as an instance of pulling a string to tighten
it, where the given path corresponds to the initial configuration
of the string while the shortest homotopic path corresponds to the
tightened configuration of the same string. The STON assumes a
string to be sampled such that there exists at most one obstacle
in the triangle formed by any three consecutive processors. In
this section, given a path $\pi_{i}$, we propose an extension of
STON to do away with that assumption and apply the extension for
computing the shortest homotopic path $\pi_{s}$ with respect to a
given set of obstacles $P$, where $\pi_{i}$ might be simple or
non-simple. The set of obstacles $P$ is specified as a set of
points in $\Re^{2}$ with no assumption being made about their
connectivity. The input path $\pi_{i}$ is specified in terms of a
pair of terminal points and either a mathematical equation or a
sequence of points in $\Re^{2}$. In the former case, the path is
sampled to obtain the sequence of points in $\Re^{2}$.

\subsection{Extending STON for sparsely sampled paths}
\label{Extending STON for sparsely sampled paths}

When a path is sampled sparsely, it can no longer be guaranteed
that at the time of choosing a feature vector, there will exist
only one obstacle within the triangular area spanned by any three
consecutive processors. Hence, the algorithm might fail to perform
correctly as Theorem \ref{Theorem:initial and final sequence of processors are homotopic} no longer holds true (see Fig. \ref{Fig4}).

\begin{figure}
\centering
        \includegraphics[width=0.45\textwidth]{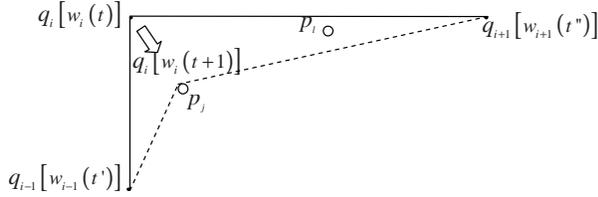}
    \caption{The STON algorithm might fail to perform correctly if
the given path is sparsely sampled, as illustrated in the example
above. The configuration formed by $q_{i-1}[w_{i-1}(t')]$,
$q_{i}[w_{i}(t)]$, $q_{i+1}[w_{i+1}(t'')]$ and that formed by
$q_{i-1}[w_{i-1}(t')]$, $q_{i}[w_{i}(t+1)]$,
$q_{i+1}[w_{i+1}(t'')]$ are not homotopic to each other as the
obstacle $p_{l}$ has been crossed.}
        \label{Fig4}
\end{figure}

Let us assume the given path is very sparsely sampled. In that
case, whenever more than one obstacle point is encountered within
a triangle formed by three consecutive processors $q_{i-1}$,
$q_{i}$, $q_{i+1}$, the centroid, say C, of the obstacle points
lying within the triangle is computed. The line segments joining
the processors $q_{i-1}$ and $q_{i+1}$ to C partitions the
obstacle space within the triangle into two disjoint parts. The
convex hull of the obstacle points lying within the partition
adjacent to the processor $q_{i}$ is computed (see Fig.
\ref{Fig5a}). A new processor is introduced and initialized near
each vertex of the convex hull\footnote{See Appendix
\ref{Introduce processors in convex hull} for details of how we
introduce processors near each vertex of the convex hull.} (see
Fig. \ref{Fig5b}). The indices of all processors and their
corresponding weights are updated. The processor $q_{i}$ is
considered as "useless" and is deleted. A similar notion of
useless units has been used in \cite{Fritzke1997}.

\begin{figure}
\centering
    \subfigure[The convex hull of the obstacle
points lying in the partition, formed by the line segments joining
the processors $q_{i-1}$ and $q_{i+1}$ to C, adjacent to $q_{i}$,
is shown.]
    {
        \label{Fig5a}
        \includegraphics[width=0.4\textwidth]{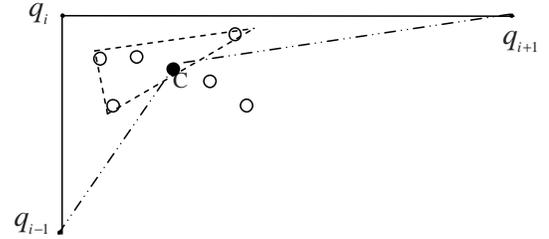}
    }
    \vfill
    \subfigure[Introduction of new processors $q_{i}$, $q_{i+1}$, $q_{i+2}$ near the
vertices of the convex hull.]
    {
        \label{Fig5b}
        \includegraphics[width=0.4\textwidth]{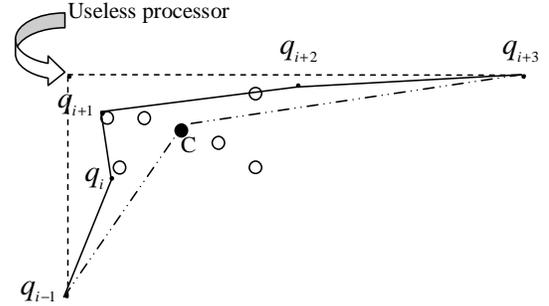}
    }
    \caption{C is the centroid of all obstacle points lying within
the triangle formed by $q_{i-1}$, $q_{i}$, $q_{i+1}$.}
\label{Fig5} 
\end{figure}

We claim, this extension of STON works correctly in all cases. Let
us, for a contradiction, assume that there exists an obstacle in
the partition not adjacent to the processor $q_{i}$ (see Fig.
\ref{Fig5a}) at which a processor has to be introduced in order to
obtain the shortest homotopic path. That is, the shortest
homotopic path will pass through an obstacle in the partition not
adjacent to the processor $q_{i}$. Let $p_{j}$ be such an obstacle
point (see Fig. \ref{Fig6}). The line from $q_{i-1}$ through
$p_{j}$ partitions the triangle formed by $q_{i-1}$, $q_{i}$,
$q_{i+1}$ into two disjoint parts. If the shortest homotopic path
passes through $p_{j}$, there cannot exist any obstacle point in
the partition, formed by the line from $q_{i-1}$ through $p_{j}$,
adjacent to $q_{i}$. But in that case, the centroid C cannot lie
in the partition, formed by the line from $q_{i-1}$ through
$p_{j}$, adjacent to $q_{i}$. Hence, a contradiction. If the
shortest homotopic path passes through $p_{j}$, the centroid C
will lie in the partition, formed by the line from $q_{i-1}$
through $p_{j}$, not adjacent to $q_{i}$. In that case, the
obstacle point $p_{j}$ lies in the partition, formed by line
segments joining processors $q_{i-1}$ and $q_{i+1}$ to C, adjacent
to $q_{i}$. Hence the claim follows.

\begin{figure}
\centering
        \includegraphics[width=0.42\textwidth]{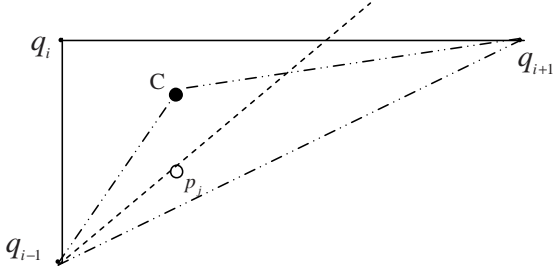}
    \caption{The shortest homotopic path cannot pass through any
obstacle point lying in the partition, formed by line segments
joining processors $q_{i-1}$ and $q_{i+1}$ to C, not adjacent to
$q_{i}$.}
        \label{Fig6}
\end{figure}

\subsection{The algorithm: Extension of STON}
\label{The algorithm: Extension of STON}

\noindent \textbf{Input:} Set of obstacles, Initial string configuration (i.e. initial sequence of processors)\\
\textbf{Output:} Final string configuration (i.e. final sequence of processors)\\
1. initialize the weights\\
2. $t\leftarrow 0$\\
3. while convergence criteria (eq. \ref{Convergence Criteria}) not satisfied, do\\
4. \hspace{3mm} $t\leftarrow t+1$\\
5. \hspace{3mm} for each processor on path, do\\
6. \hspace{6mm} $z\leftarrow$number of obstacles inside triangle formed with neighboring processors\\
7. \hspace{6mm} if $z=0$,\\
8. \hspace{9mm} create feature vector (eq. \ref{Create Feature Vector})\\
9. \hspace{9mm} update weight (eq. \ref{Weight Updating Equation})\\
10. \hspace{4mm} if $z=1$,\\
11. \hspace{7mm} select feature vector (eq. \ref{Choose Feature Vector})\\
12. \hspace{7mm} update weight (eq. \ref{Weight Updating Equation})\\
13. \hspace{4mm} if $z>1$,\\
14. \hspace{7mm} compute convex hull of the selected obstacles\\
15. \hspace{7mm} introduce new processors (Appendix \ref{Introduce processors in convex hull}) and update their weights (eq. \ref{Weight Updating Equation})\\
16. return sequence of processors\\

First, note that the algorithm for the original version of STON
comprised of steps $1$ through $12$ of the above algorithm.
Computational complexity of step $6$ is $O(logn+m)$ where $n$ is
the number of obstacle points and $m$ is the number of obstacle
points inside the triangle formed by a processor and its adjacent
neighbors, $0\leq m\leq n$. This complexity can be achieved by a
one-time construction of a $2D$ range tree of the obstacle points
in $O(nlogn)$ time. Querying the tree requires $O(logn+m)$ time
using fractional cascading \cite{deBergetal1997}. On average,
$m=\frac{n}{k}$ where $k$ is the number of processors. Thus,
complexity of STON is $O(logn+n/k)$ per processor per sweep,
assuming the input path has been sampled at half the minimum
distance between the obstacles. The purpose of extending STON is
to eliminate the constraint on sampling. As a result, steps
$13-15$ had to be introduced which uses the algorithm recursively
for computing convex hull. Let $T(n)$ be the complexity of
extension of STON for each sweep and $k$ be the number of
processors at the end of a sweep. Then, from the above algorithm,
we get

\begin{equation}
\label{Complexity}
T(n) = k(r_{c}T(m) + logn + m)
\end{equation}

\noindent where $r_{c}$ is the number of sweeps required to
compute convex hull. The convex hull is computed to determine the
number and locations of new processors that have to be introduced
so that there does not exist more than one obstacle in any
triangle formed by three consecutive processors. For this purpose,
it is sufficient to compute just one sweep of the convex hull
instead of a tight convex hull. This strategy saves computational
costs as the newly added processors will eventually not remain on
the convex hull of the obstacles but will remain on the path.
Therefore,

\begin{equation}
\label{Simplified complexity}
T(n) = k(T(\frac{n}{k}) + logn + \frac{n}{k}) = O(n(logk+log_{k}n))
\end{equation}

Thus the complexity of extension of STON is
$O(\frac{n}{k}(logk+log_{k}n))$ per processor per sweep. As the
number of processors ($k$) increases, $m\rightarrow 1$, and the
complexity of extension of STON becomes $O(logn)$. Thus the
extension of STON starts with a complexity of
$O(\frac{n}{k}(logk+log_{k}n))$ and reaches a complexity of
$O(logn)$ when no more processors are required to be introduced. It is interesting to note that the complexity of STON is comparable with that of some of the recently proposed variants of SOM \cite{KusumotoTakefuji2006,PalDattaPal2001}.

In computational geometry, many researchers have proposed
algorithms to solve this problem with the primary goal of
minimizing computational complexity (refer to \cite{Mitchell2000}
for a detailed review). Efrat et al \cite{Efrat2002} and
Bespamyatnikh \cite{Bespamyatnikh2003a} have independently
proposed output sensitive algorithms for the problem. Their
algorithms tackle the problem for simple and non-simple paths in
different ways, with the one for non-simple paths having higher
complexity. Bespamyatnikh's algorithm for non-simple paths
achieves $O(log^{2}n)$ time per output vertex. If the terminal points of a given path are not fixed, the
resulting problem is NP-hard \cite{Richards1984} which has not
been dealt with in this paper.

\section{Experimental Results}
\label{Experimental Results}

In this section, we present experimental results obtained by
deploying STON to different data sets for different purposes. The
extension of STON has been used to compute shortest homotopic
paths, smooth paths, and convex hulls. In Fig. \ref{Fig7}, the
performance of STON is illustrated assuming $\alpha$ is assigned a
large value close to unity when feature vectors are chosen
according to eq. \ref{Choose Feature Vector}. In that case, the
algorithm might fail to perform optimally as the final path might
cling to undesired obstacles, as shown by the arrow in Fig.
\ref{Fig7}. This happens because once a processor falls on an
obstacle, which might happen for some processors before
convergence if $\alpha$ is large, the processor fails to let the
obstacle go as the obstacle continues to remain within its
triangle and the processor has no memory of which direction it
proceeded from. Such performance from STON can only be averted by
choosing $\alpha$ much smaller than unity when feature vectors are
selected by eq. \ref{Choose Feature Vector}. This provides ample
time for the processors to distribute themselves along the path
before coming close to any obstacle. For our experiments, $\alpha$
was chosen as follows

\begin{equation}
\label{alpha}
\alpha(t)=
  \left\{
  \begin{array}{ll}
    1.0 & \mbox{if feature vector is created}\\
    \beta(1+\frac{t}{T}) & \mbox{if feature vector is selected}
  \end{array}
  \right.
\end{equation}

\noindent where $\beta$ is the learning constant, $0<\beta<0.5$,
and $T$ is the total number of sweeps that STON is expected to
converge within. Typically, $\beta$ and $T$ are assigned values
0.01 and 5000 respectively. Thus, initially a processor proceeds
slowly towards the chosen obstacle but the rate of proceeding
increases as more and more sweeps are computed. This prevents STON
from converging at suboptimal solutions. Throughout our
experiments, $\epsilon$ is chosen to be 0.001\% of the maximum
distance covered along any one dimension by the obstacles.

\begin{figure}
\centering
        \includegraphics[width=0.3\textwidth,trim=0 0 0 0,clip]{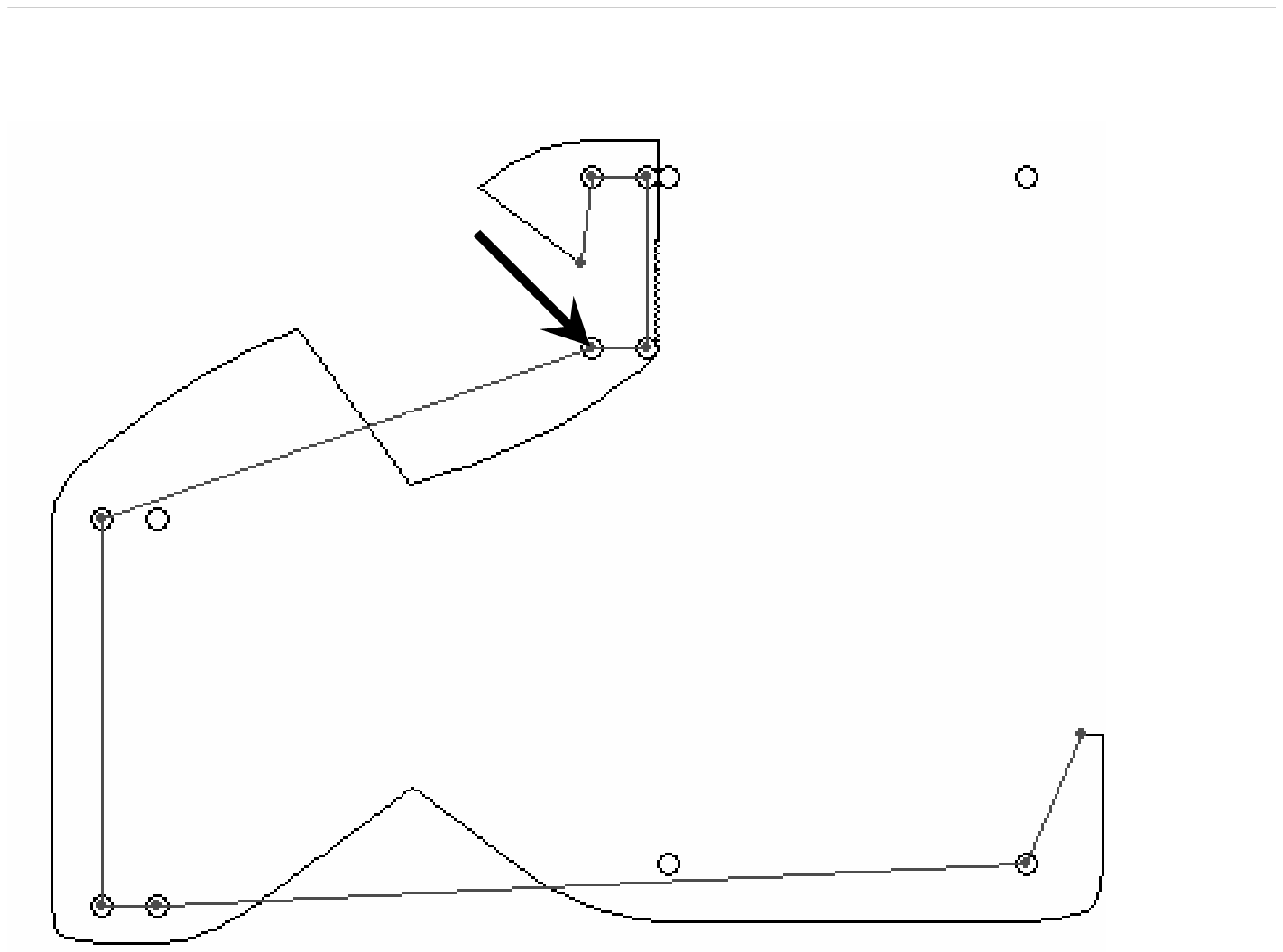}
    \caption{STON works incorrectly for large values of $\alpha$. Circles
represent point obstacles, while dark and light lines represent
initial and tightened configurations of a path respectively.}
        \label{Fig7}
\end{figure}

\begin{figure}[t!]
    \subfigure[At initialization.]
    {
        \label{Fig8a}
        \includegraphics[width=0.22\textwidth]{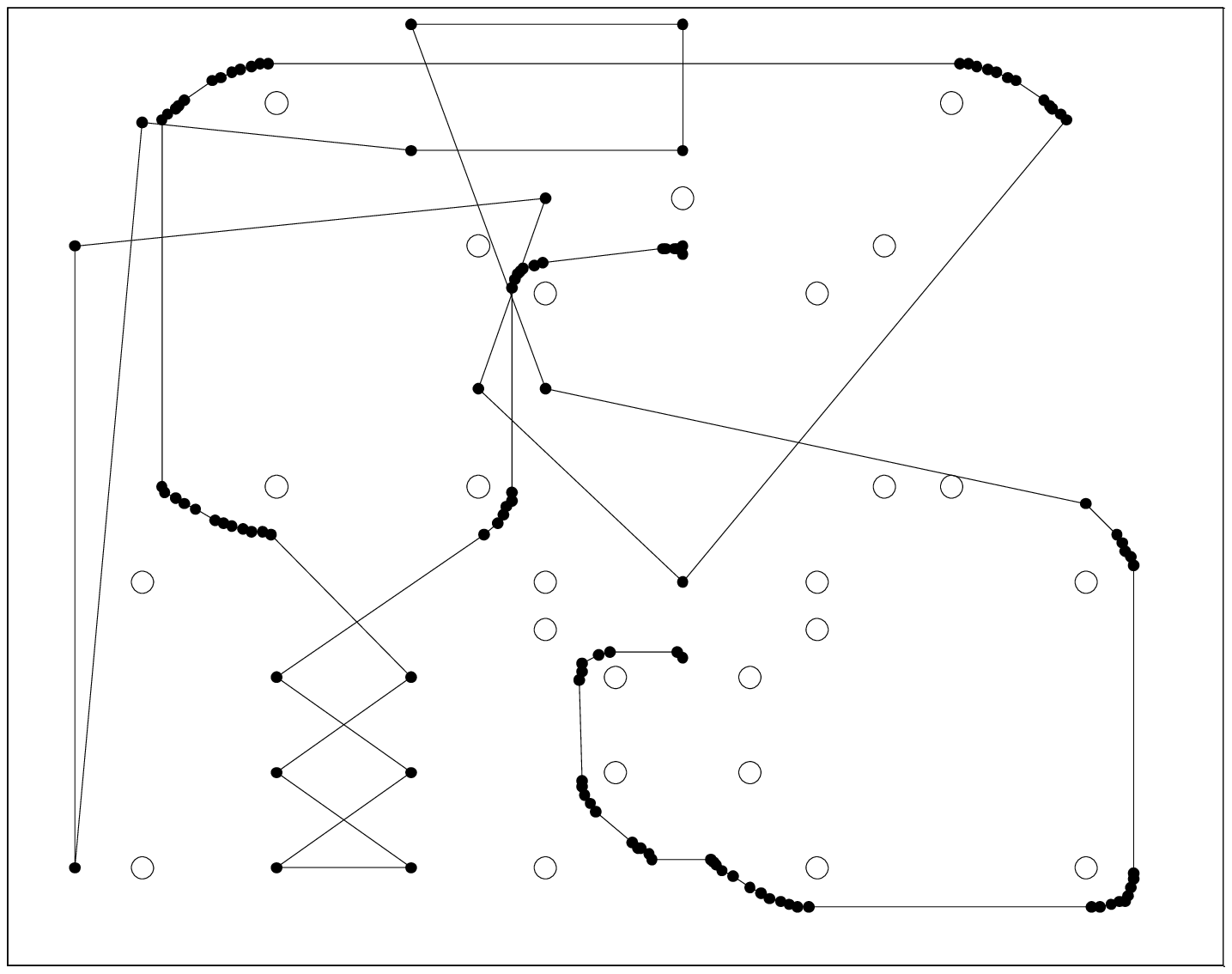}
    }
    \hfil
    \subfigure[After first sweep.]
    {
        \label{Fig8b}
        \includegraphics[width=0.22\textwidth]{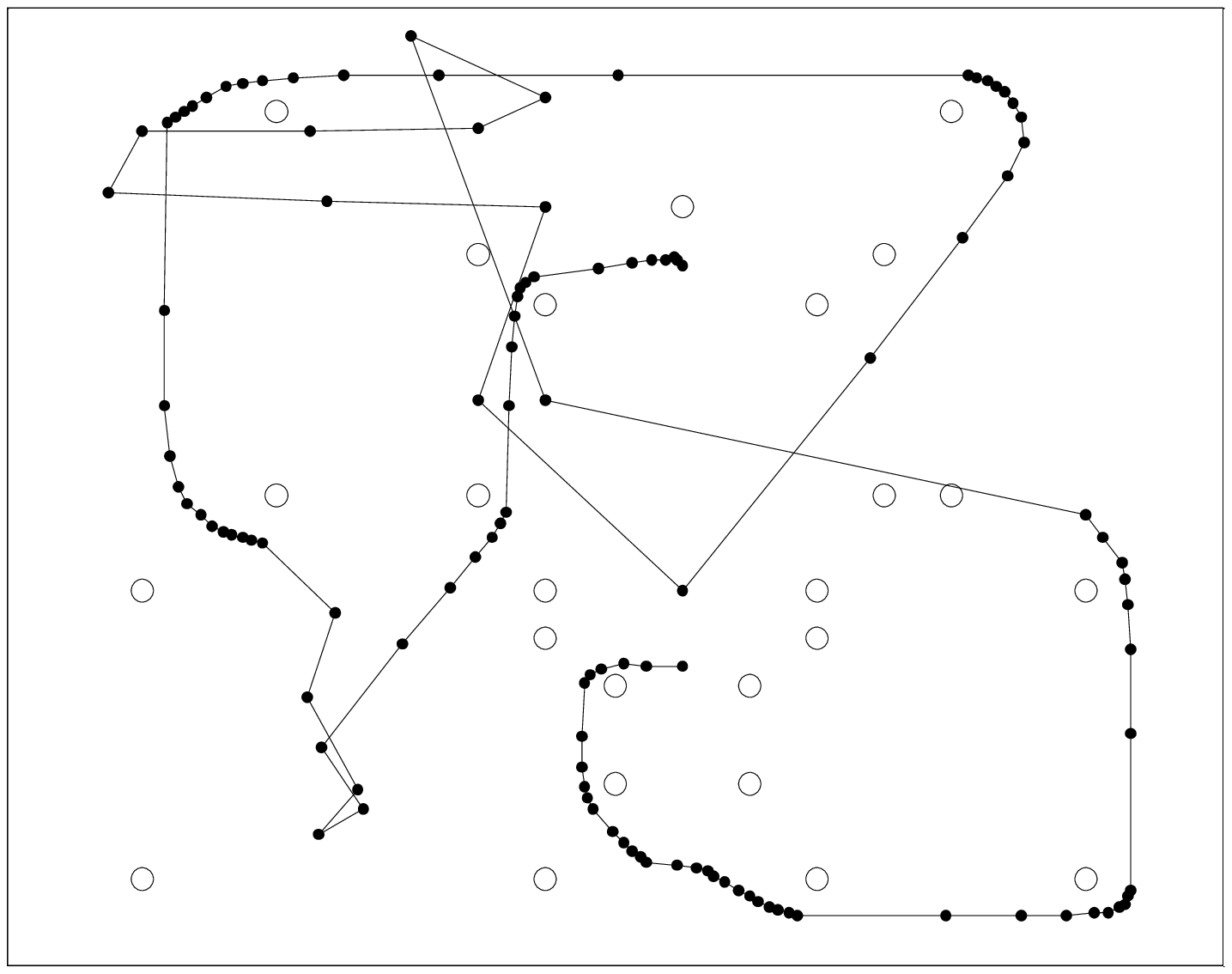}
    }
    \vfill
    \subfigure[After 5 sweeps.]
    {
        \label{Fig8c}
        \includegraphics[width=0.22\textwidth]{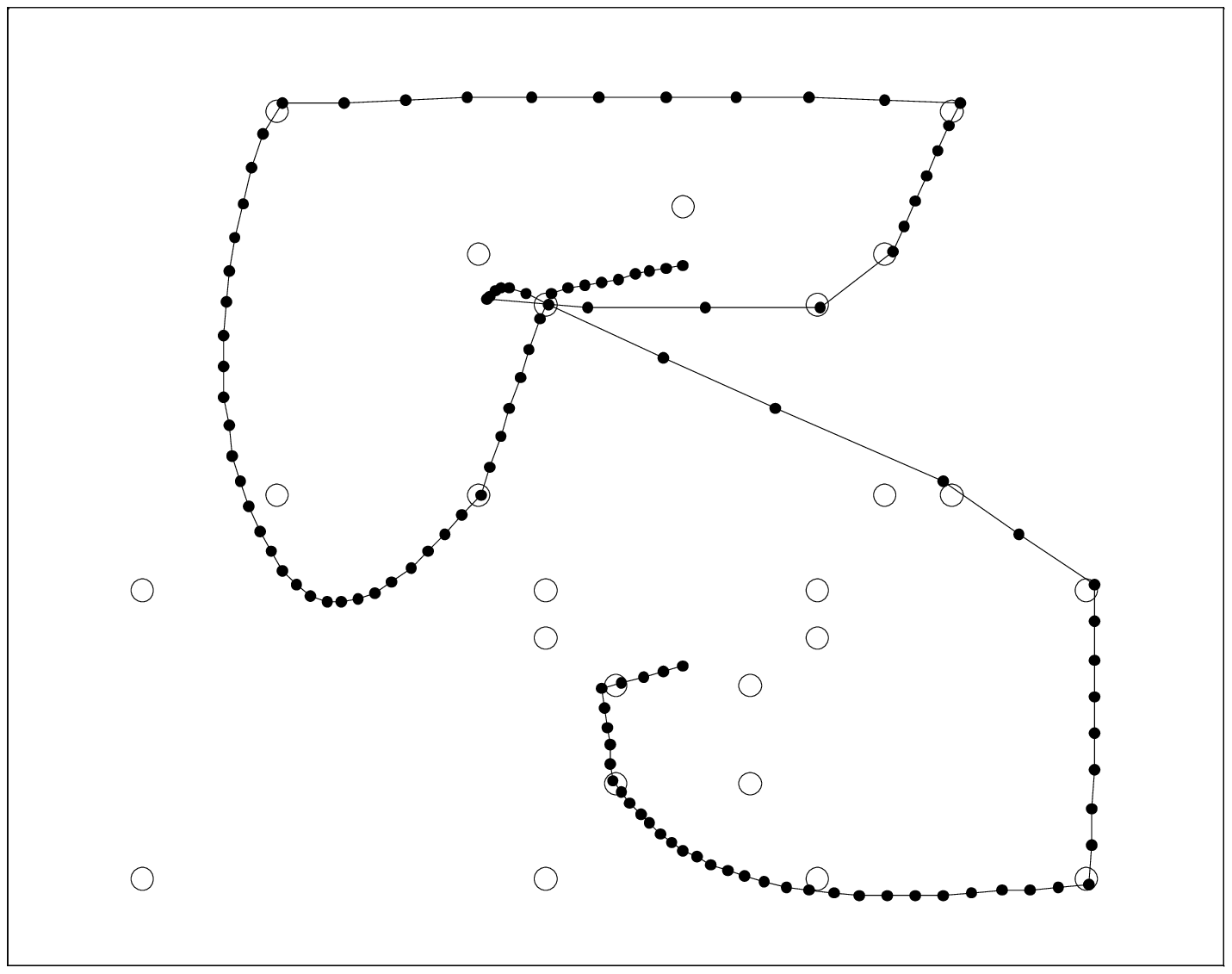}
    }
    \hfil
    \subfigure[After 10 sweeps.]
    {
        \label{Fig8d}
        \includegraphics[width=0.22\textwidth]{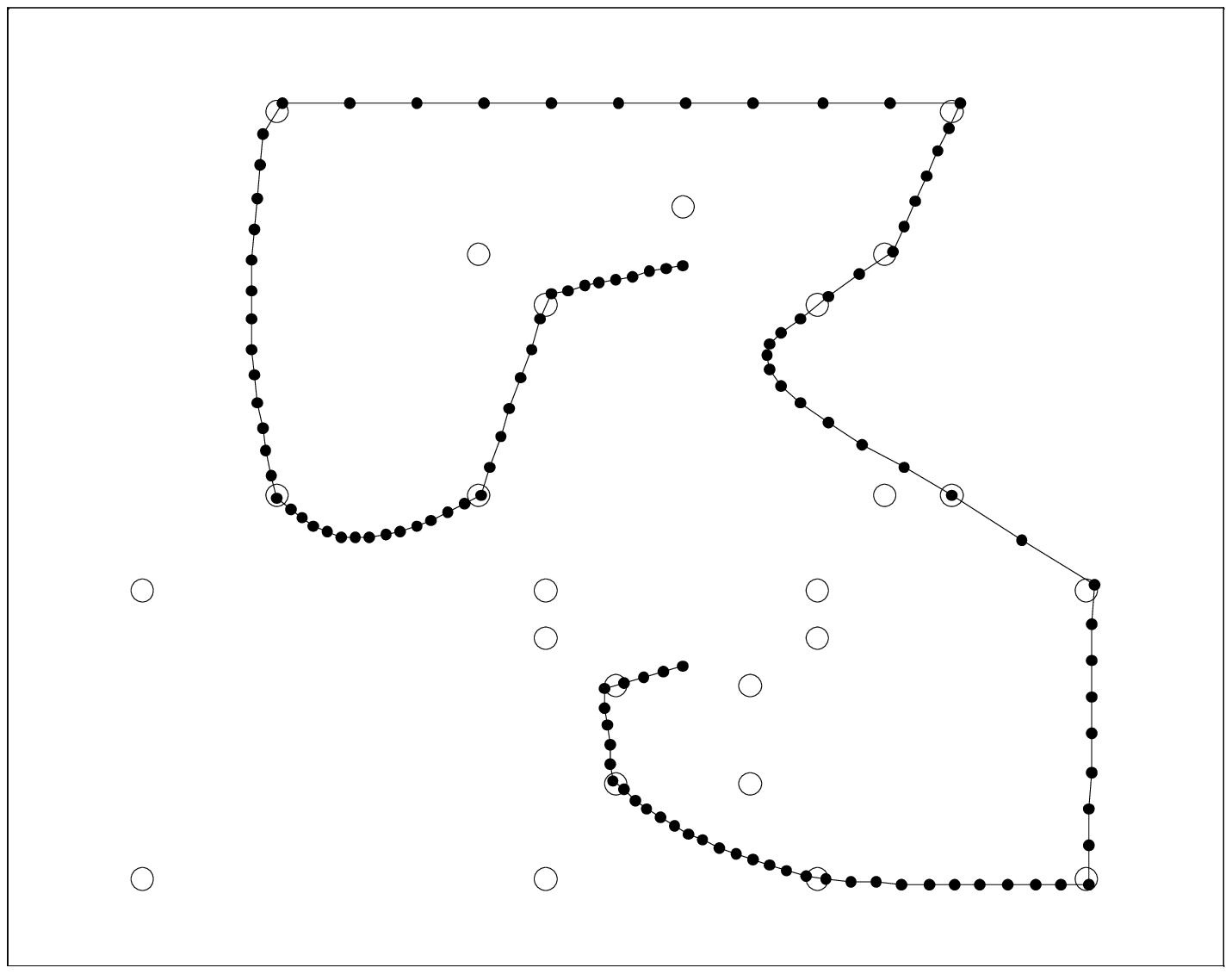}
    }
    \vfill
    \subfigure[After 15 sweeps.]
    {
        \label{Fig8e}
        \includegraphics[width=0.22\textwidth]{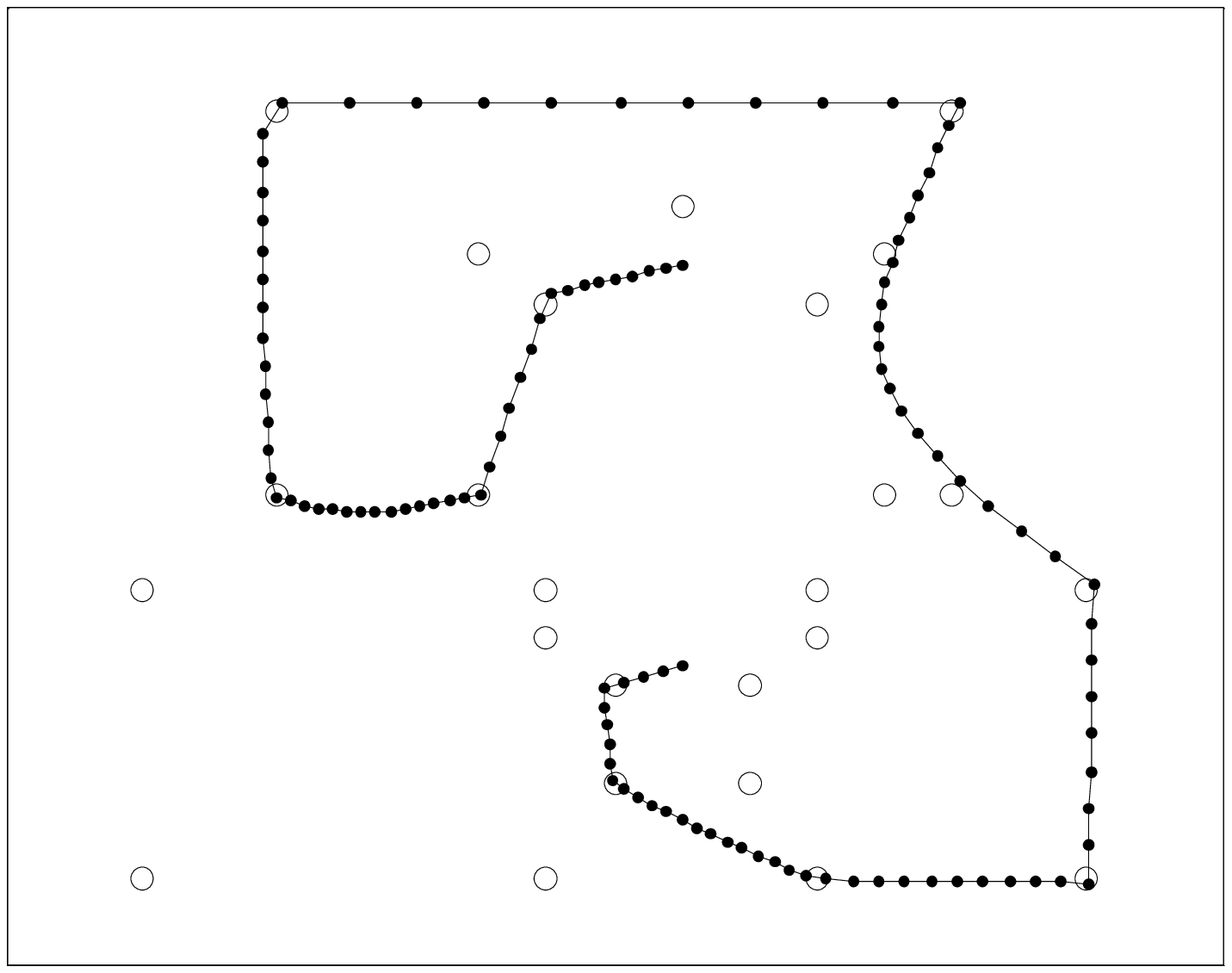}
    }
    \hfil
    \subfigure[After 20 sweeps.]
    {
        \label{Fig8f}
        \includegraphics[width=0.22\textwidth]{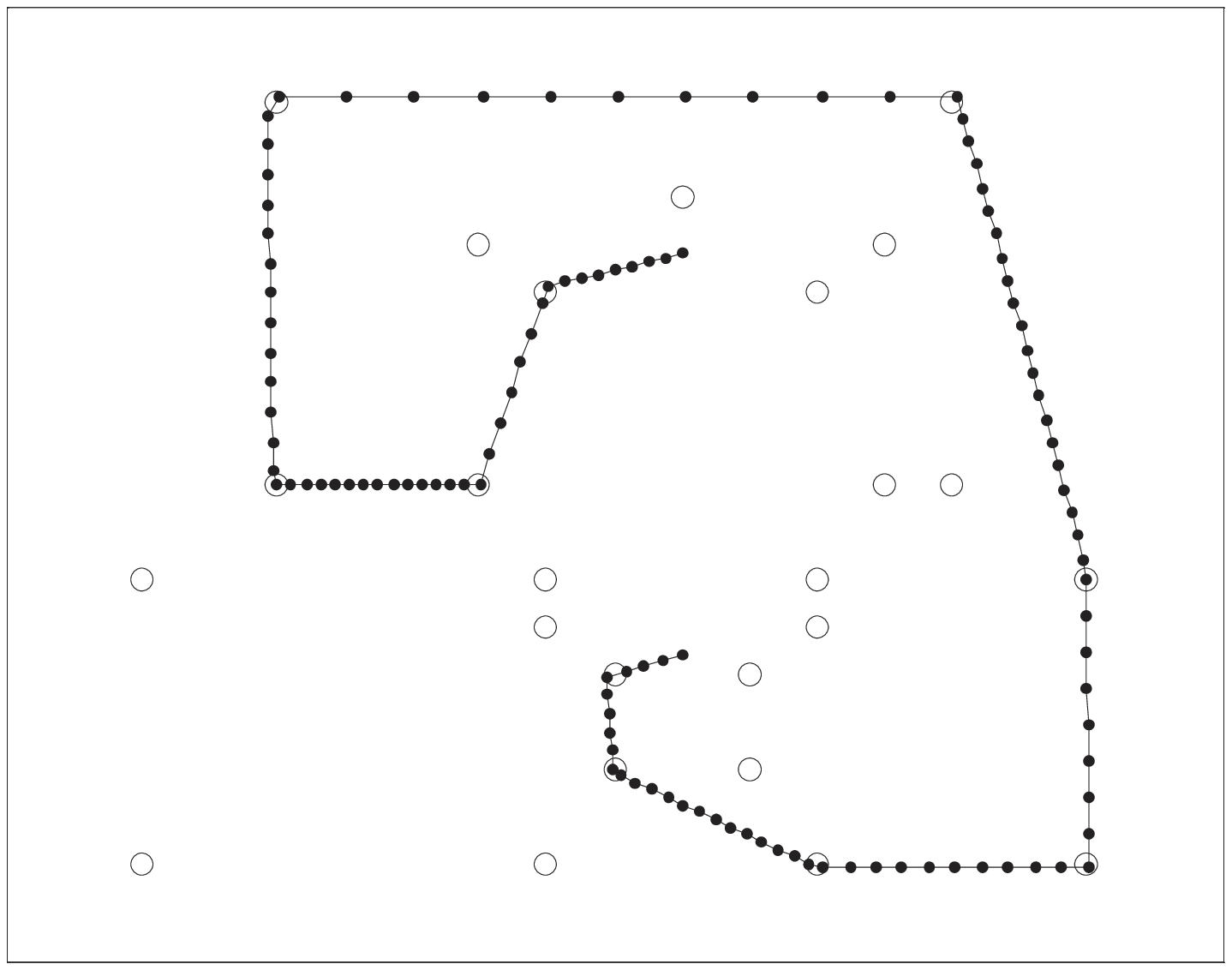}
    }
    \caption{STON applied to a non-simple path that turns out to be
simple when tightened. Circles denote point obstacles while dots
denote locations of processors on the path.}
\label{Fig8} 
\end{figure}

\begin{figure}[h!]
    \subfigure[At initialization.]
    {
        \label{Fig9a}
        \includegraphics[width=0.22\textwidth]{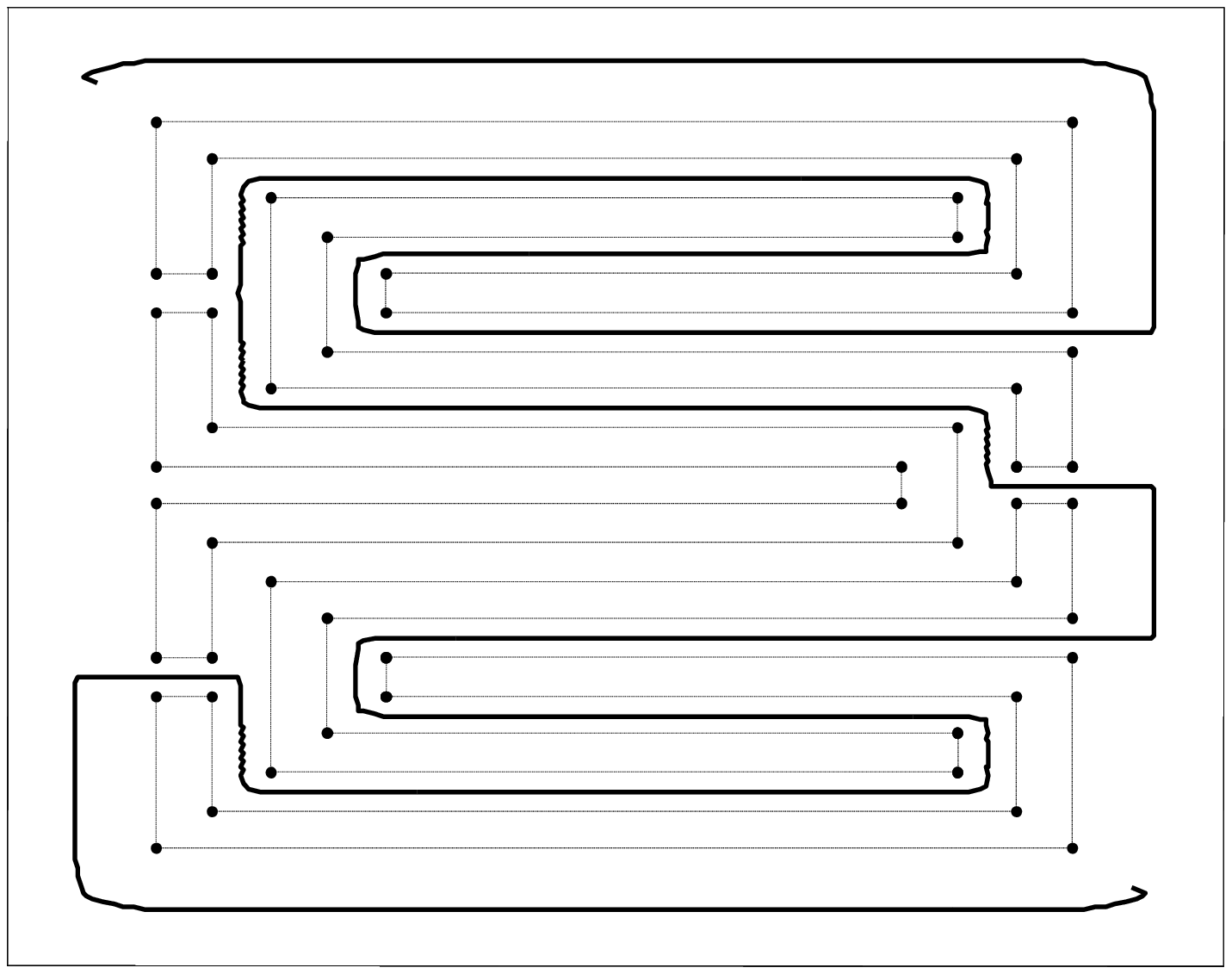}
    }
    \hfil
    \subfigure[After 15 sweeps.]
    {
        \label{Fig9b}
        \includegraphics[width=0.22\textwidth]{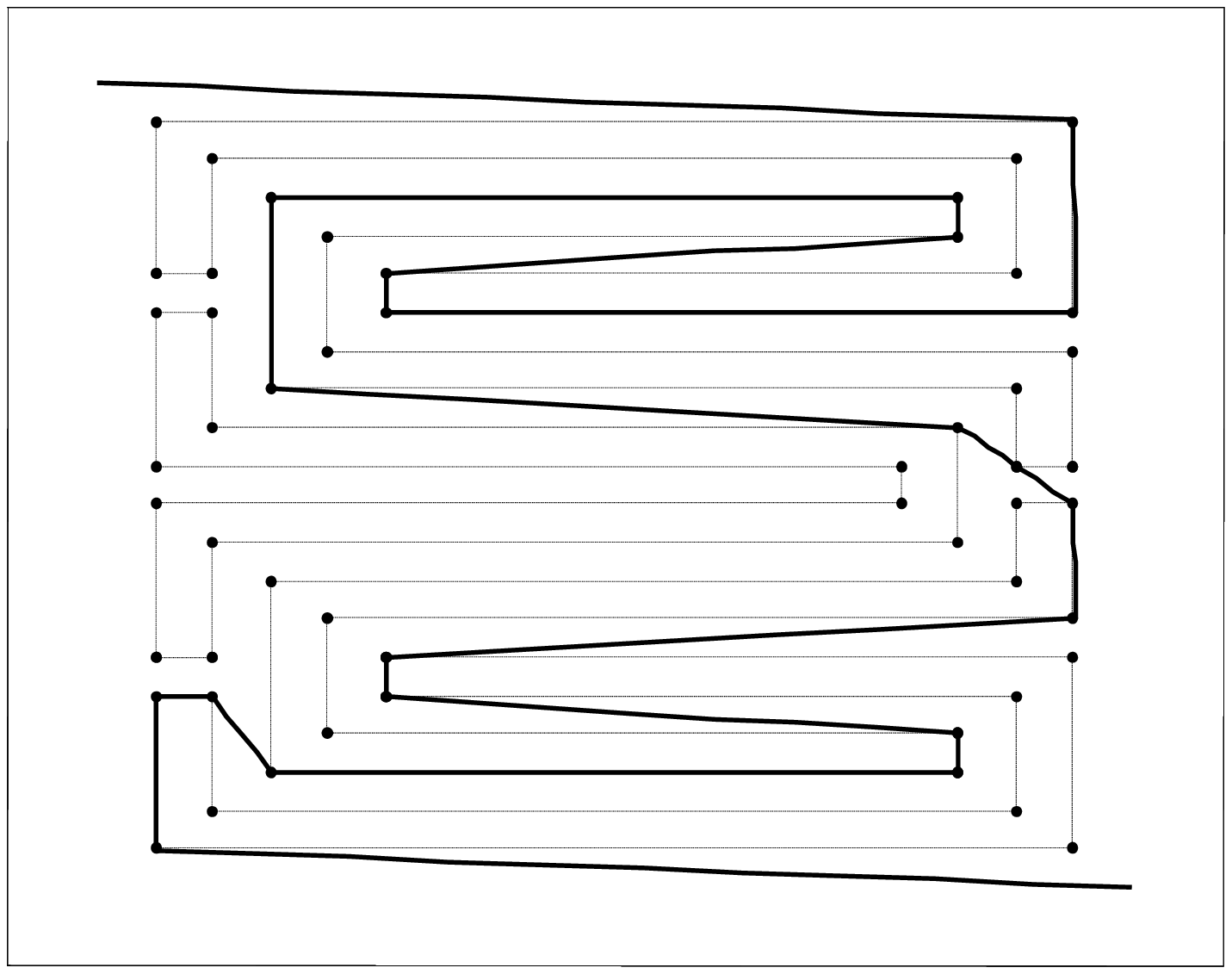}
    }
    \caption{Performance of STON in a structured obstacle
environment. Dotted lines represent the contours of the obstacles
while firm lines represent paths.}
\label{Fig9} 
\end{figure}

Fig. \ref{Fig8a} illustrates a complicated configuration of a path
that has not been sampled uniformly. STON was applied to shorten
this path and the configurations reached after 1, 5, 10, 15, 20
sweeps are shown in Fig. \ref{Fig8}. It can be seen that the
processors gradually distribute themselves evenly along the path
due to their weights being updated with respect to created feature
vectors (eq. \ref{Create Feature Vector}). This and the assignment
of $\alpha$ according to eq. \ref{alpha} help STON to avoid
suboptimal convergence which could have been the case after 10
sweeps (see Fig. \ref{Fig8d}). The correct shortest configuration
was reached within 20 sweeps.

Fig. \ref{Fig9} illustrates the performance of STON in a
structured environment where obstacles are not point objects but
are two dimensional shapes. The algorithm converged within 15
sweeps. In a structured environment, STON considers the points
forming the two dimensional shapes as point obstacles and does not
use their connectivity information. The algorithm performs equally
well in structured as well as unstructured environments.

\begin{figure}
    \subfigure[At initialization.]
    {
        \label{Fig10a}
        \includegraphics[width=0.22\textwidth]{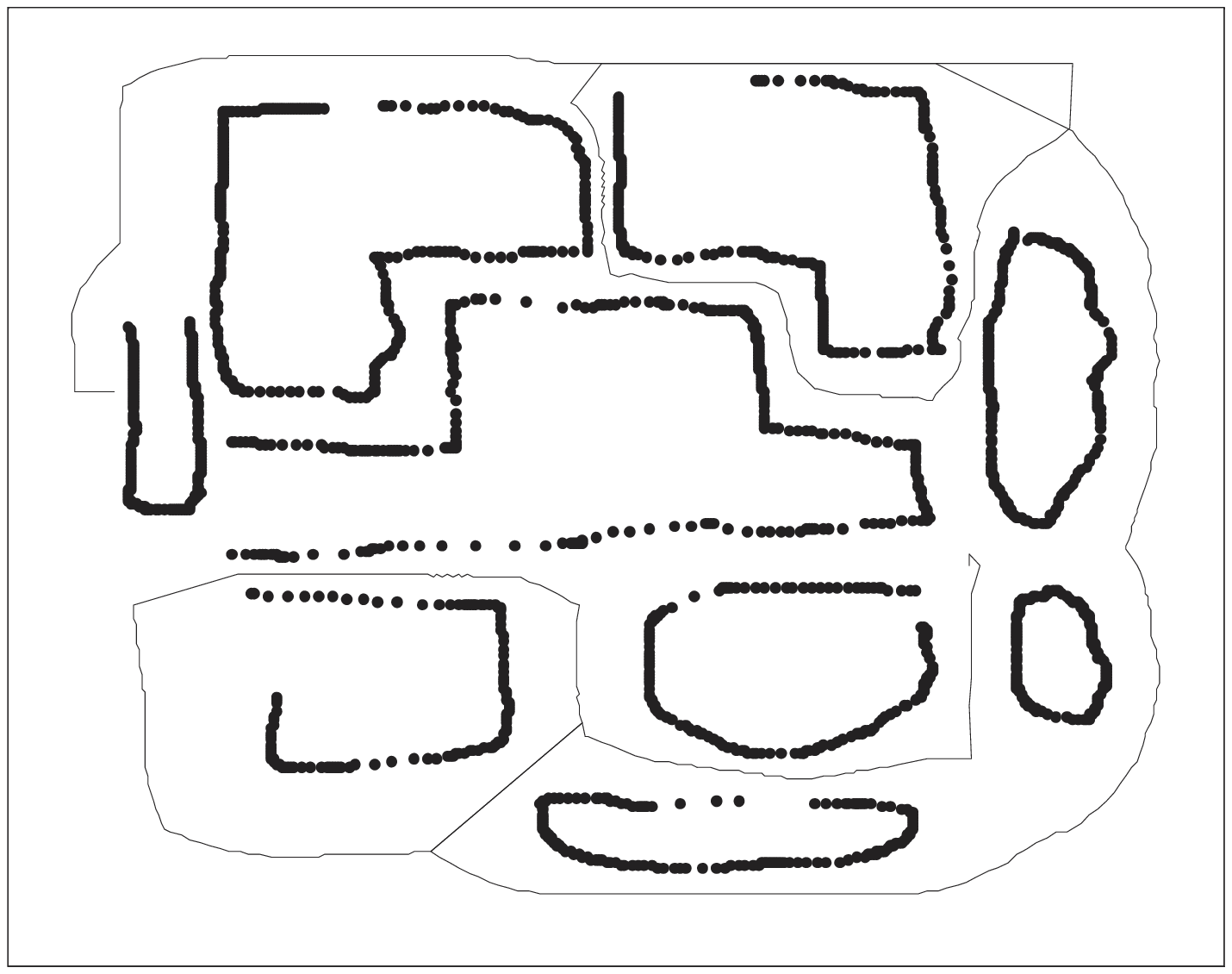}
    }
    \hfil
    \subfigure[After first sweep.]
    {
        \label{Fig10b}
        \includegraphics[width=0.22\textwidth]{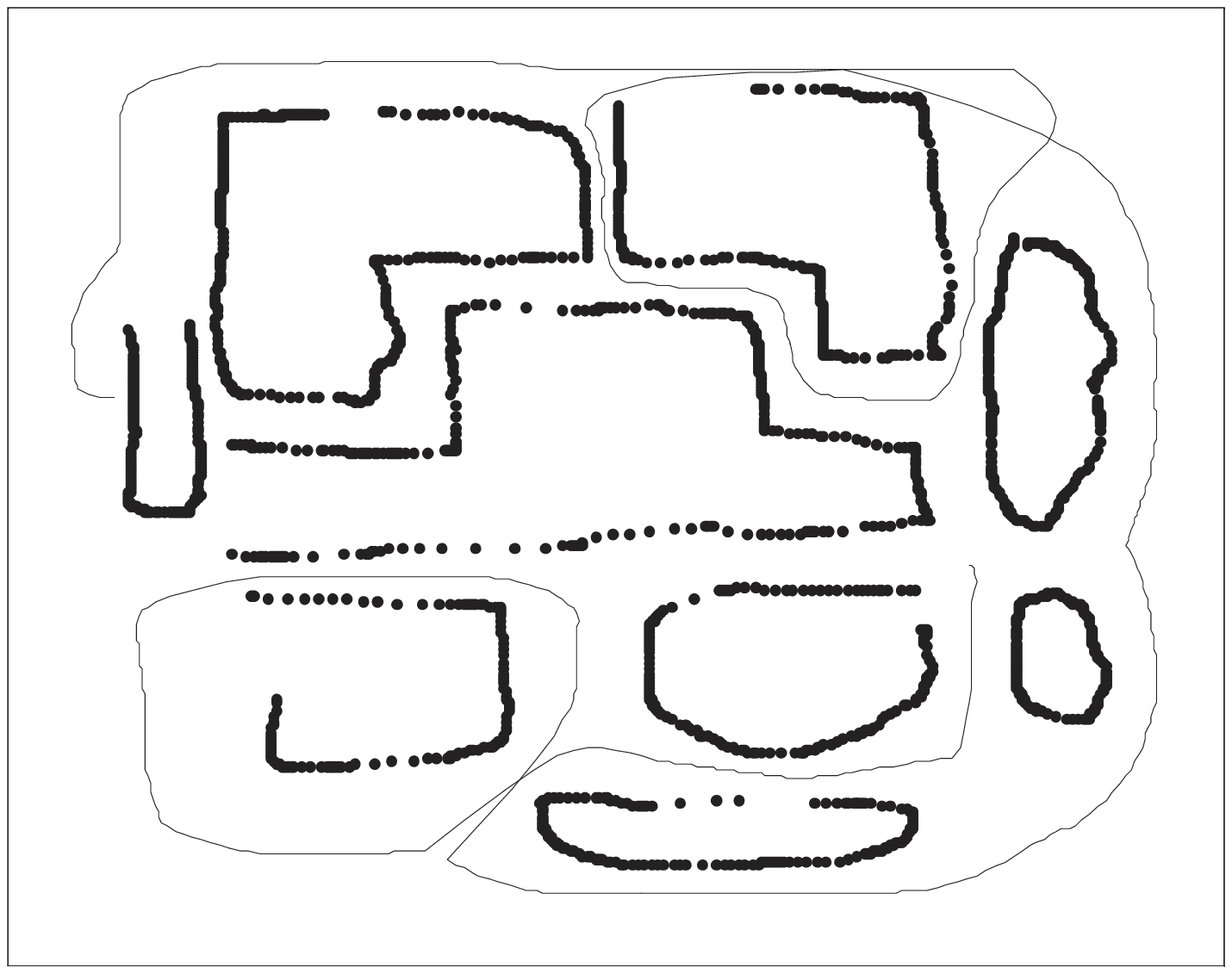}
    }
    \vfill
    \subfigure[After 10 sweeps.]
    {
        \label{Fig10c}
        \includegraphics[width=0.22\textwidth]{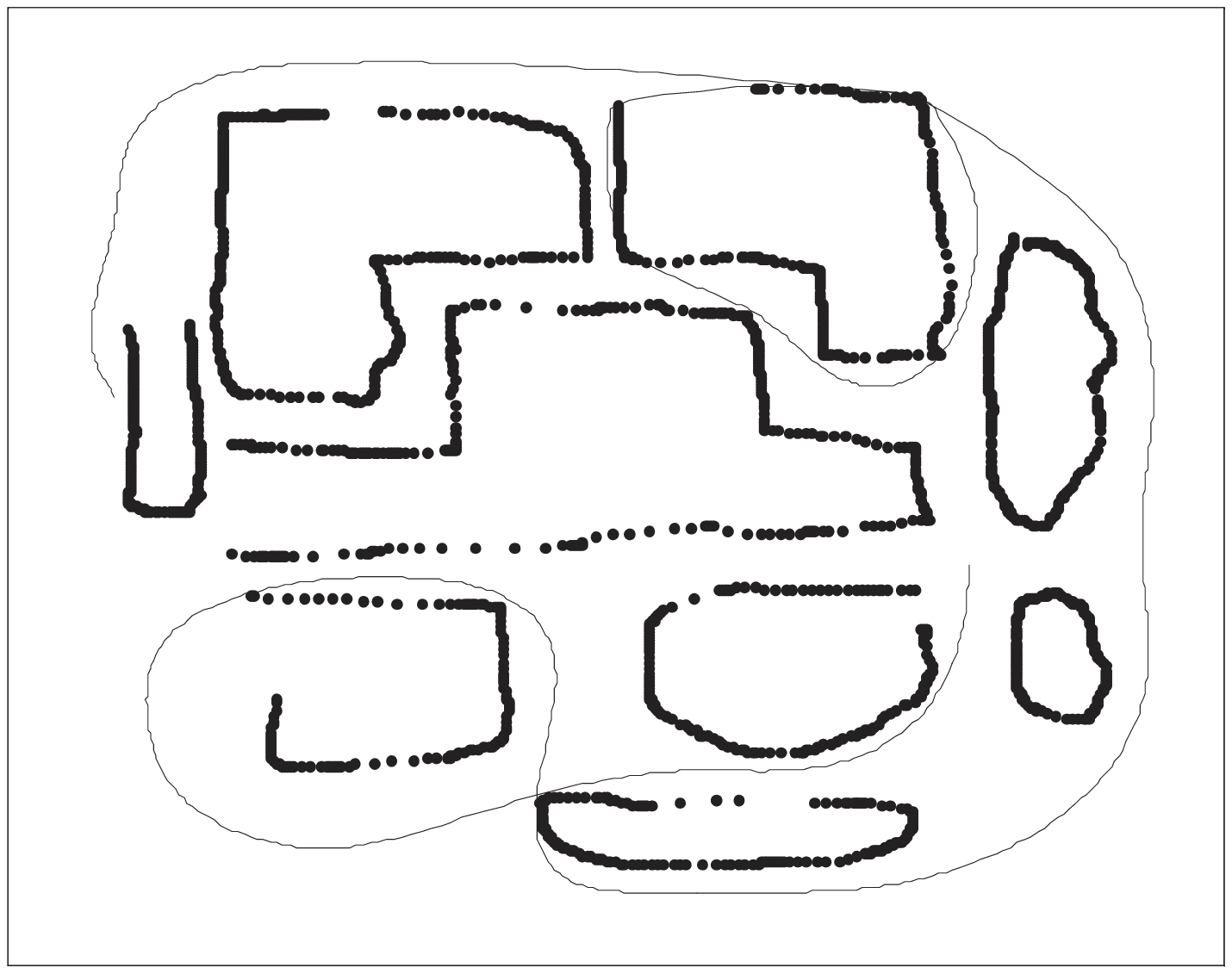}
    }
    \hfil
    \subfigure[After 20 sweeps.]
    {
        \label{Fig10d}
        \includegraphics[width=0.22\textwidth]{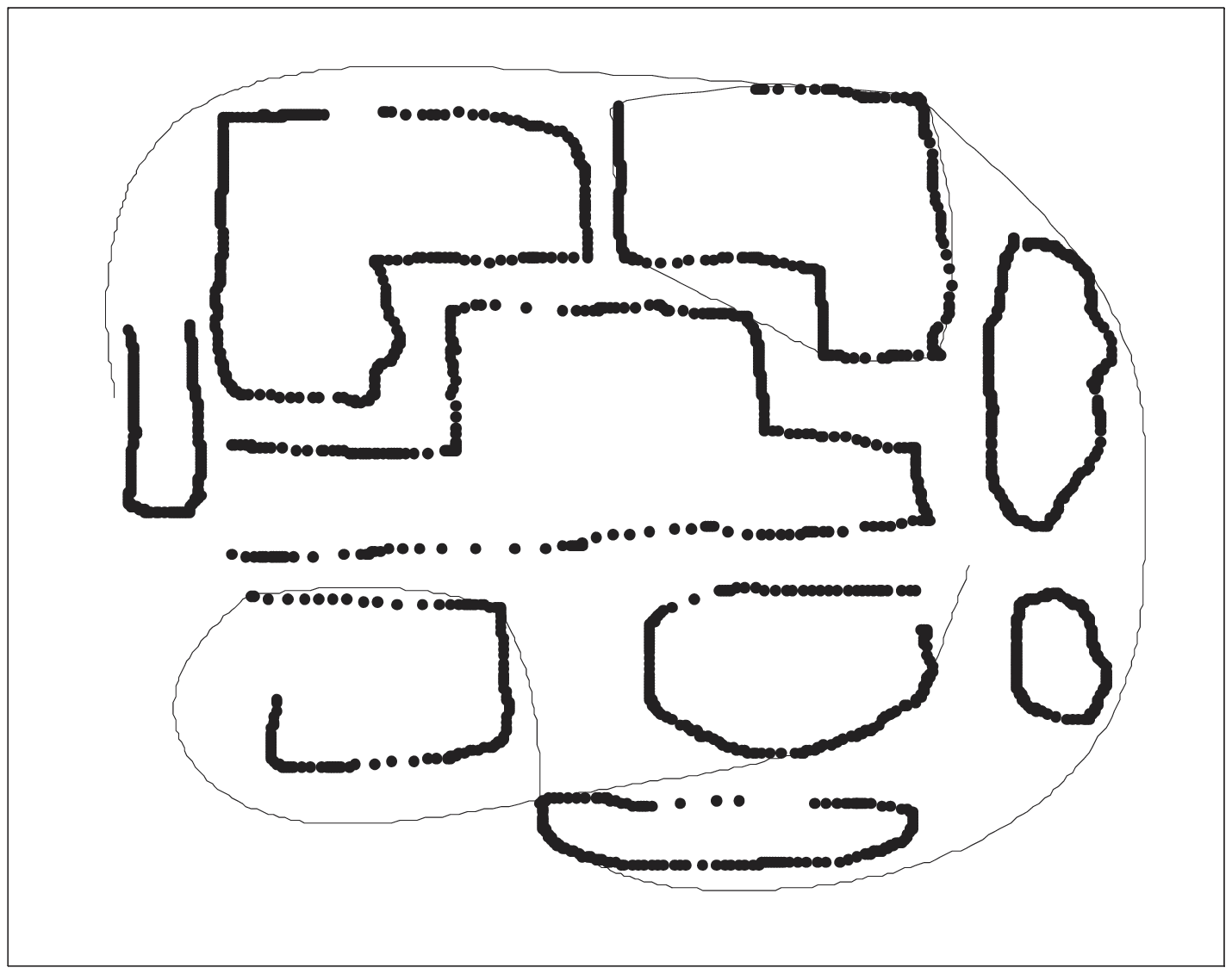}
    }
    \caption{STON finds a smooth homotopic path with respect to a
huge set (thousands) of obstacles. Dots represent point obstacles
while firm lines represent paths.}
\label{Fig10} 
\end{figure}

In real world applications, such as navigational planning of
mobile robots \cite{Quinlan1993} or route formation for military
planning \cite{Chandra2002,banerjee2003constructing,BanerjeeChandraPathASC2006,chandra2009diagrammatic,BanerjeeChandra2012}, the absolute shortest path is not
always desired; a suboptimal path that is devoid of sharp turns is
often more desirable in such cases. Fig. \ref{Fig10} shows the
capability of STON to produce such paths by appropriately
adjusting the parameter $\epsilon$. In this case, $\epsilon$ was
chosen to be 0.1\%. The illustration in Fig. \ref{Fig10} further
demonstrates the capability of STON to handle a huge number of
obstacles, in the range of a few thousand.

\begin{figure}
    \subfigure[At initialization.]
    {
        \label{Fig11a}
        \includegraphics[width=0.22\textwidth]{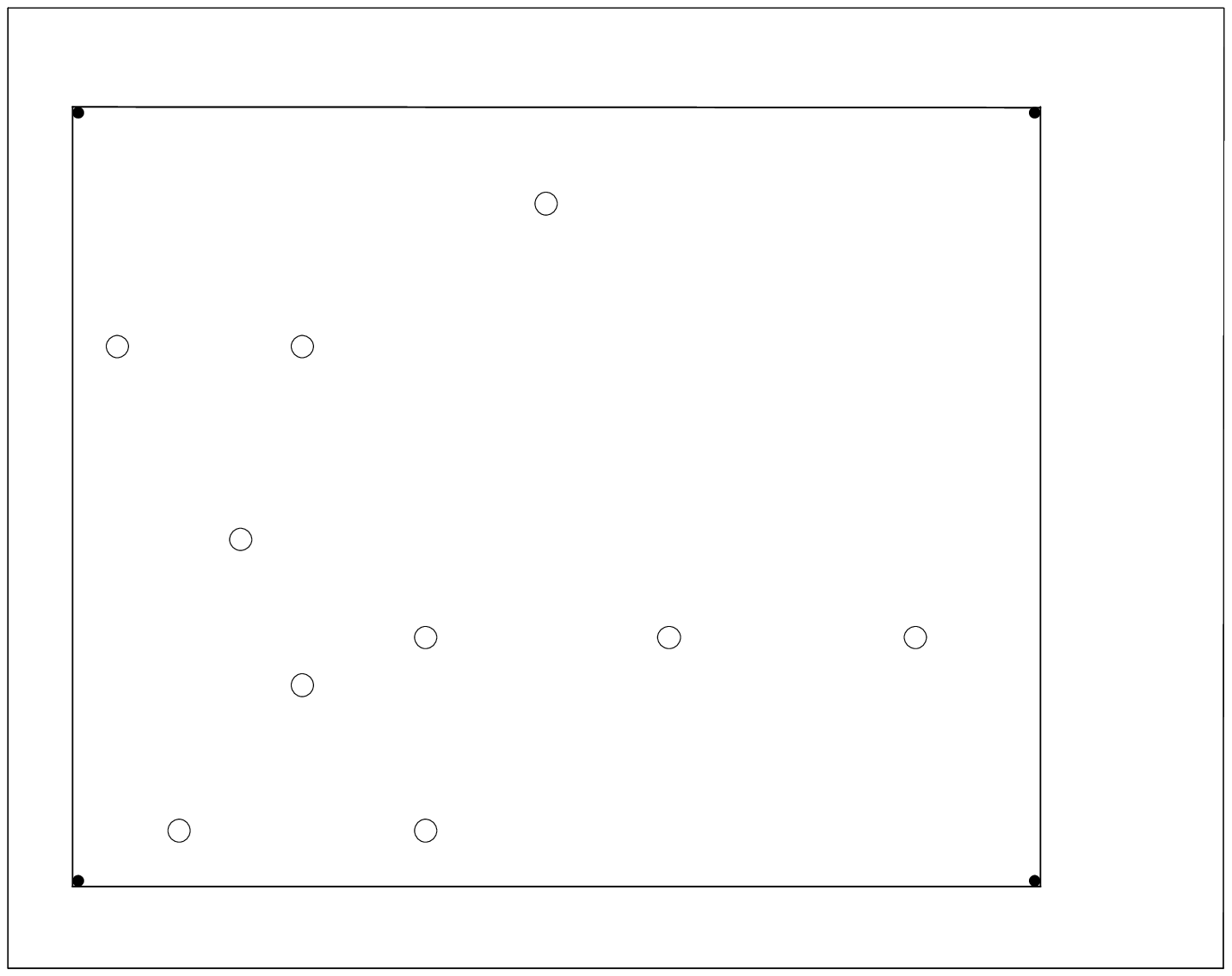}
    }
    \hfil
    \subfigure[After 10 sweeps.]
    {
        \label{Fig11b}
        \includegraphics[width=0.22\textwidth]{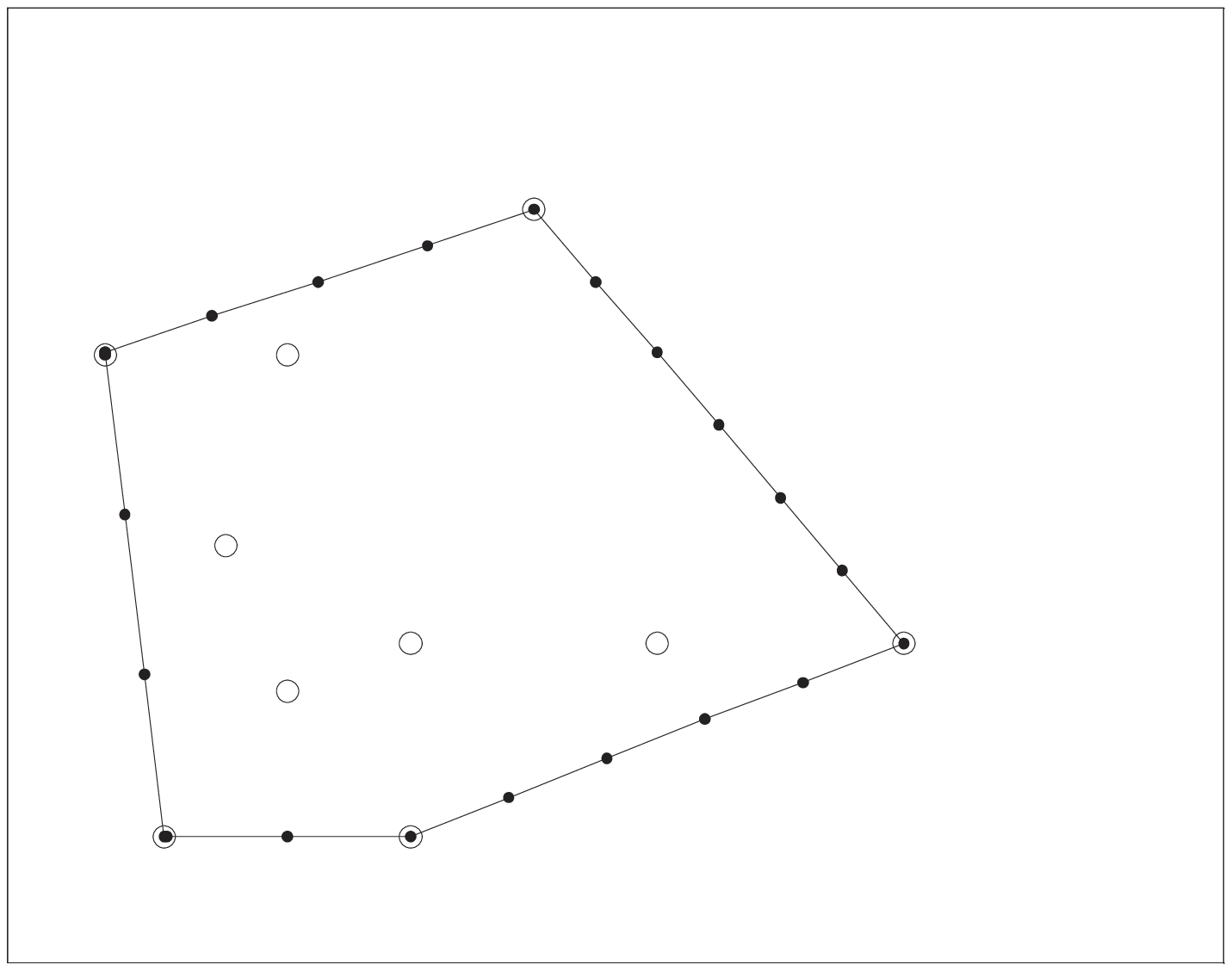}
    }
    \caption{STON computes the convex hull of a set of points
represented by circles. Lines represent the contour of the convex
hull while dots on the contour represent locations of the
processors.}
\label{Fig11} 
\end{figure}

STON can be used to compute the convex hull of a set of points, as
shown in Fig. \ref{Fig11}. A path has a starting and an ending
point which are fixed and common for all paths belonging to the
same homotopy class. To exploit this information for computation
of shortest homotopic paths, the first and the last processors,
$q_{1}$ and $q_{k}$, on a path with $k$ processors were assumed to
be fixed and their corresponding weights were never updated.
Computation of convex hull however does not require any fixed
processors, so weights corresponding to all processors were
updated. The starting and ending points were assumed to be the
same. Such a modification of STON makes it functionally similar to
an elastic band or a Snake \cite{Kassetal1988}.

Experiments with a number of different data sets, a few of which
are shown in Fig. \ref{Fig7} through \ref{Fig11}, assuming the
learning constant $\beta$ to be $10^{-2}$, reveal certain
characteristics of STON. It is expected that the total number of
sweeps required for convergence increases with the increase in
number of processors. Outcomes of our experiments satisfy such
expectations but they also show that average number of sweeps per
processor required for convergence decreases with the increase in
number of processors (see Fig. \ref{Fig13}). This is important in
determining how many processors to sample a path with as one
should choose the optimum number of processors for minimizing
computational costs. The errors in Fig. \ref{Fig13} refer to the
ratio of the length of the shortened path at convergence with
respect to the length of the shortest path.

\begin{figure}
\centering
        \includegraphics[width=0.45\textwidth]{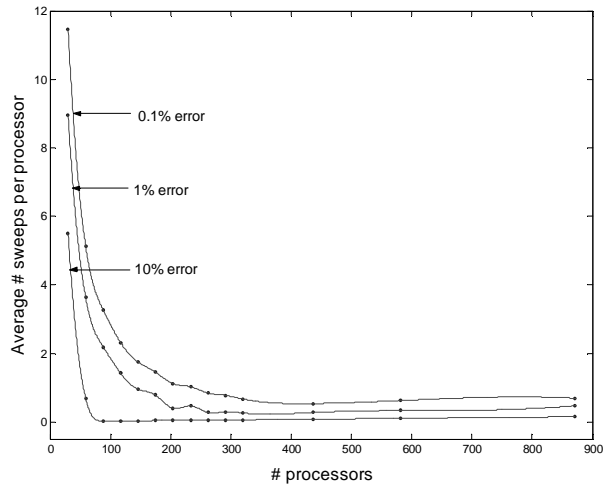}
    \caption{Experiments show that average number of sweeps per
processor required for convergence using STON decreases with the
increase in number of processors.}
        \label{Fig13}
\end{figure}

The learning rate is an important factor for ensuring faster
convergence. The number of processors being fixed, total number of
sweeps required for convergence increases with decrease in
learning constant $\beta$ (see Fig. \ref{Fig14}). We experimented
with a number of data sets including those shown in Fig.
\ref{Fig7} through \ref{Fig11}, sampling the paths with 30, 59,
88, 117, 146, 175 processors at different locations and varying
the learning constant $\beta$ from $10^{-4}$ to 0.2 for each data
set. Choosing a very high learning constant might lead to
suboptimal results as has been illustrated in Fig. \ref{Fig7}. It
is interesting to note from Fig. \ref{Fig14} that for low learning
constants, such as $10^{-2}$ or lesser, the total number of sweeps
required for convergence is more for lesser number of processors.
This observation only reinforces the fact that average number of
sweeps per processor decreases with the increase in number of
processors for low learning constants.

\begin{figure}
\centering
        \includegraphics[width=0.45\textwidth]{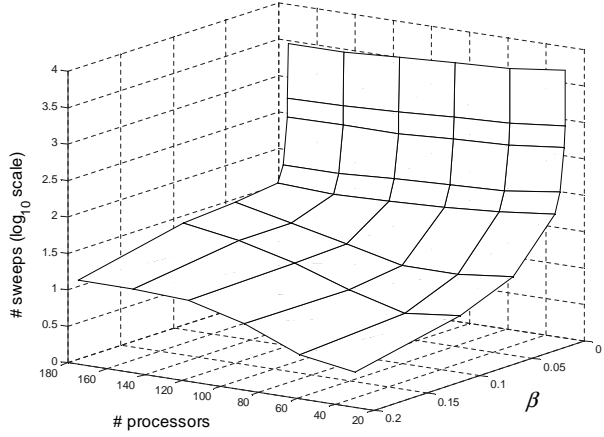}
    \caption{As the learning constant decreases, the number of
sweeps required for convergence using STON increases.}
        \label{Fig14}
\end{figure}

\section{Conclusion}
\label{Conclusion}

A self-organizing neural network algorithm STON is proposed that
models the phenomenon undergone by the particles forming a string
when the string is tightened from one or both of its ends amidst
obstacles. Discussions of the properties and correctness of this
anytime algorithm is presented assuming the given string is
sampled at a frequency of at least $d/2$ where $d$ is the minimum
distance between the obstacles. It is shown how STON might be
extended for tightening strings when the above constraint on
sampling is not met. This extension is applied to compute the
shortest homotopic path with respect to a set of obstacles. Proof
of correctness and computational complexity of the extension of STON are included. Experimental
results show that the proposed algorithm works correctly with both
simple and non-simple paths in reasonable time as long as the
constraints for correctness are met. STON is used to generate
smooth and shorter homotopic paths, a problem that can be modeled
as the phenomenon of tightening a string. STON is also used as an
elastic band for computing convex hulls. Future research aims at improving the computational complexity of the extension of STON and using it to solve more problems that can be mapped into the
problem of tightening a string or an elastic band.

\appendix
\section{Appendices}

\subsection{A Finite Sampling Theorem}
\label{A Finite Sampling Theorem}

A string in $\Re^{2}$, wound around point obstacles, can be
finitely sampled in such a way so as to guarantee only one
obstacle within the triangular area spanned by any three
consecutive points on the string. The following theorem states the
constraint necessary to be imposed on the sampling.

\begin{theorem}\label{Theorem:Finite sampling theorem} 
Sampling a string at half the minimum distance
between the obstacles guarantees at most one obstacle in any
triangle formed by three consecutive points on the string.
\end{theorem}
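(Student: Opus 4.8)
The plan is to derive the theorem from a single elementary observation: three consecutive sample points span a triangle whose diameter is at most the minimum inter-obstacle distance, so such a triangle cannot contain two obstacles at once. First I would fix notation. Let $P=\{p_{1},\dots,p_{n}\}$ be the point obstacles and let $d=\min_{i\neq j}\|p_{i}-p_{j}\|$ denote the minimum distance between them. The hypothesis ``the string is sampled at half the minimum distance between the obstacles'' means that any two consecutive sample points $q_{i}$, $q_{i+1}$ satisfy $\|q_{i}-q_{i+1}\|\leq d/2$; if the sampling is prescribed along arc length, the Euclidean distance is no larger than the arc length, so the same bound holds in either reading.

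Next I would bound the diameter of the triangle spanned by three consecutive points $q_{m-1}$, $q_{m}$, $q_{m+1}$. The diameter of a triangle (as a convex region) is the length of its longest edge. Two of the three edges, $\overline{q_{m-1}q_{m}}$ and $\overline{q_{m}q_{m+1}}$, join consecutive sample points and hence have length at most $d/2$, while the third edge satisfies $\|q_{m-1}-q_{m+1}\|\leq\|q_{m-1}-q_{m}\|+\|q_{m}-q_{m+1}\|\leq d$ by the triangle inequality. Therefore the triangle has diameter at most $d$, and in particular any two points lying in it are at distance at most $d$ from each other.

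Then I would argue by contradiction. Suppose two distinct obstacles $p$, $p'$ both lie in the triangle. On one hand $\|p-p'\|\leq d$ by the diameter bound just established; on the other hand $\|p-p'\|\geq d$ by the definition of $d$. Hence $\|p-p'\|=d$ and the diameter of the triangle is exactly $d$, which forces the longest edge $\overline{q_{m-1}q_{m+1}}$ to have length $d$ with equality in the triangle inequality above; that is, $q_{m}$ lies on the segment $\overline{q_{m-1}q_{m+1}}$ with $\|q_{m-1}-q_{m}\|=\|q_{m}-q_{m+1}\|=d/2$, so the ``triangle'' degenerates to that segment, and $p$, $p'$ must be its two extreme points, i.e. $\{p,p'\}=\{q_{m-1},q_{m+1}\}$.

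The main obstacle — indeed the only place any care is required — is disposing of this degenerate endpoint case: three collinear sample points whose outer two happen to coincide with obstacles. I would close it by stating explicitly one of the following mild conventions already implicit in the setup: the vertices of a triangle are taken to be non-collinear; or the string does not pass through an obstacle point; or, most simply, the sampling is taken at strictly less than $d/2$, which makes the diameter strictly less than $d$ and yields the contradiction outright. Under any of these the argument is complete, and we conclude that the triangle formed by any three consecutive sample points of the string contains at most one obstacle.
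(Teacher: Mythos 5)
Your proof is correct and takes essentially the same route as the paper: both bound the diameter of the triangle by its longest side, use the two string edges of length at most $d/2$ together with the triangle inequality to get a diameter of at most $d$, and contradict the fact that distinct obstacles are at distance at least $d$. The only difference is that you explicitly dispose of the degenerate equality case (collinear samples with obstacles at the two outer points), which the paper glosses over by asserting the maximum distance within the triangle is strictly less than $d$.
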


\begin{proof}
Let $d$ be the minimum distance between any two unique obstacles
in $P$, $P$ being the set of point obstacles. Let us sample the
string such that the distance between any two consecutive points
on the string is at most $d/2$. Since $d$ is finite, clearly this
leads to a finite sampling of the string. The theorem claims, this
sampling ensures that a triangle formed by any three consecutive
points on the string will never contain more than one unique
obstacle.

For contradiction, let us assume, there exists two obstacle points
in a triangle formed by three consecutive points $q_{i-1}$,
$q_{i}$, $q_{i+1}$ (see Fig. \ref{Fig15}). The segments
$\overline{q_{i-1}q_{i}}$ and $\overline{q_{i}q_{i+1}}$ included
in the string that form two sides of the triangle are each of
length at most $d/2$. Thus the maximum distance between any two
points lying within the triangle is less than $d$. But the
distance between any two obstacle points is at least $d$. Hence, a
contradiction, and the claim follows.
\end{proof}

\begin{figure}
\centering
        \includegraphics[width=0.45\textwidth]{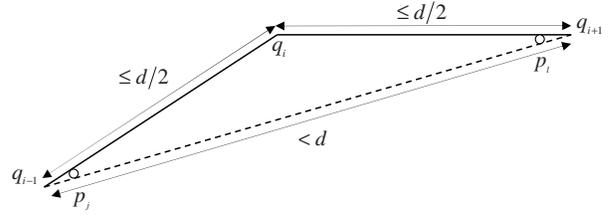}
    \caption{Sampling a path uniformly at half the minimum distance
between the obstacles guarantees at most one obstacle in any
triangle.}
        \label{Fig15}
\end{figure}

\subsection{Procedure for Introducing Processors in Convex Hull}
\label{Introduce processors in convex hull}

Here we describe the procedure for introducing processors near
each vertex of the convex hull in the extension of STON. The newly
introduced processor, say $q_{i}$, should be placed at a location
such that connecting it with the neighboring processors, $q_{i-1}$
and $q_{i+1}$, does not alter the homotopy class of the path i.e.
the path in which the new processors are being introduced should
remain in the same homotopy class as the given path. This is not a
trivial task as illustrated in Fig. \ref{Fig16}, where all the
processors are introduced near the convex hull but the new path
$\pi_{new}$ is not homotopic to the old path $\pi_{old}$.

\begin{figure}
\centering
        \includegraphics[width=0.4\textwidth]{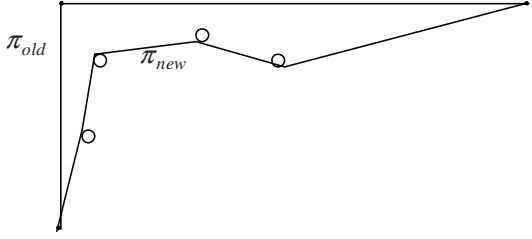}
    \caption{Introducing processors anywhere near the vertices
(shown by circles) of the convex hull does not guarantee the new
path will be homotopic to the old one.}
        \label{Fig16}
\end{figure}

In order for the new path to remain in the same homotopy class as
the old one, processors cannot be introduced within the convex
hull and lines joining consecutive processors cannot intersect the
edges of the convex hull. 

\begin{theorem}\label{Theorem:Introducing processors in convex hull}
If processors are introduced outside the convex hull in the regions bounded by the extended adjacent edges of the convex hull, then the lines joining the consecutive processors will not intersect the edges of the convex hull.
\end{theorem}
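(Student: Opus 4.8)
The plan is to reduce the whole statement to a one-edge-at-a-time half-plane separation argument. Write the convex hull $H$ with vertices $v_1,\dots,v_r$ in cyclic order, and let $q_j$ denote the processor introduced near $v_j$, strictly inside the wedge $W_j$ bounded by the two hull edges incident to $v_j$, each extended past $v_j$. Since the processors created in this step are exactly $q_1,\dots,q_r$, a pair of consecutive created processors is of the form $q_j,q_{j+1}$, and it suffices to prove the stronger statement that $\overline{q_j q_{j+1}}$ is disjoint from $H$ altogether; disjointness from $H$ in particular gives disjointness from every edge of $H$, which is the theorem.

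First I would fix the supporting line $L$ of $H$ carrying the edge $\overline{v_j v_{j+1}}$, let $H^{-}$ be the closed half-plane bounded by $L$ that contains $H$, and let $H^{+}$ be the open complementary half-plane. Because $v_j$ is a genuine hull vertex, the interior angle of $H$ there is strictly between $0$ and $\pi$, so $W_j$ is the pointed convex cone with apex $v_j$ generated by the directions $\vec a = v_j - v_{j-1}$ and $\vec b = v_j - v_{j+1}$; that is, $W_j = \{\, v_j + \alpha\vec a + \beta\vec b : \alpha,\beta \ge 0 \,\}$. Taking signed distance to $L$ (normalized to be positive on $H^{+}$), the apex $v_j$ contributes $0$, the term $\beta\vec b$ contributes $0$ because $\vec b$ is parallel to $L$, and the term $\alpha\vec a$ contributes $\alpha\,\delta$, where $\delta>0$ is the distance from $v_{j-1}$ to $L$ --- positive precisely because $v_{j-1}$, being a distinct hull vertex, is not collinear with the edge $\overline{v_j v_{j+1}}$. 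Hence every point strictly inside $W_j$ has $\alpha>0$ and therefore lies in $H^{+}$; in particular $q_j \in H^{+}$.

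Then I would run the mirror-image computation at the other endpoint: $W_{j+1}$ is the cone with apex $v_{j+1}$ generated by $v_{j+1}-v_j$ and $v_{j+1}-v_{j+2}$; the first generator is parallel to $L$, and the second contributes strictly positive signed distance to $L$, again by convexity (the vertex $v_{j+2}$ lies on the $H$ side of $L$). So $q_{j+1}\in H^{+}$ as well. Since $H^{+}$ is convex, the whole segment $\overline{q_j q_{j+1}}$ lies in $H^{+}$, which is disjoint from $H^{-}\supseteq H$; thus $\overline{q_j q_{j+1}}$ crosses no edge of $H$. The degenerate cases in which $H$ is a single point or a segment are immediate, and the two end segments joining the retained neighbors $q_{i-1}$ and $q_{i+1}$ to the first and last newly created processors are handled by the same half-plane argument at the extreme hull edges, using that $q_{i-1}$ and $q_{i+1}$ already lie outside the triangle enclosing the obstacles and hence outside $H$.

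The crux is the middle step: recognizing $W_j$ precisely as the nonnegative span of $\vec a$ and $\vec b$ (which needs the interior angle at $v_j$ to be less than $\pi$, i.e.\ that we are at an actual vertex) and checking that the generator coming from the edge other than $\overline{v_j v_{j+1}}$ points strictly into $H^{+}$. Once those two facts are in hand, the conclusion is only a signed-distance evaluation combined with convexity of a half-plane.
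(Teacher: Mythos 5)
Your proof is correct and follows essentially the same route as the paper: both arguments hinge on the supporting line through the hull edge $\overline{V_jV_{j+1}}$, with the two processors lying on its outer side by the wedge construction, so the segment joining them cannot reach any hull edge. Your version merely makes explicit, via the cone decomposition $W_j=\{v_j+\alpha\vec a+\beta\vec b:\alpha,\beta\ge 0\}$ and the signed-distance computation, the fact the paper asserts ``by construction,'' and phrases the conclusion directly through convexity of the open half-plane rather than as the paper's contradiction.
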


\begin{proof}
Let $V_{1}V_{2}V_{3}...V_{m}$ be a $m$-sided polygon which is the
convex hull for a set of obstacles under consideration (see Fig.
\ref{Fig17}). Let $q_{i}$ be a processor in the region formed by
extensions of adjacent edges $\overline{V_{i-1}V_{i}}$ and
$\overline{V_{i+1}V_{i}}$, $\forall i$, $1\leq i\leq m$. The claim
states that there cannot be an intersection between the line
segment $\overline{q_{j}q_{j+1}}$ and any edge of the convex hull.

\begin{figure}
\centering
        \includegraphics[width=0.45\textwidth]{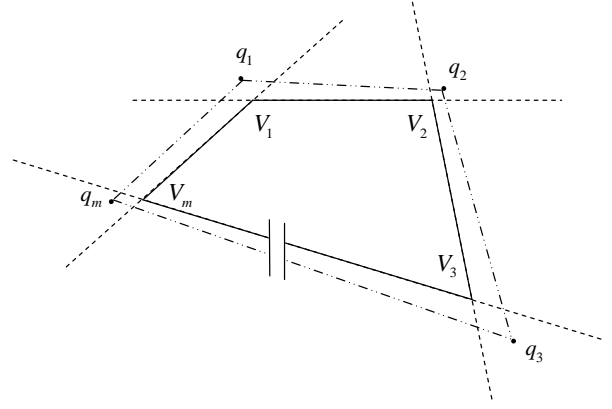}
    \caption{Introduction of processors outside the convex hull in
the regions bounded by the extended adjacent edges of the convex
hull guarantees that the lines joining the consecutive processors
will not intersect the edges of the convex hull.}
        \label{Fig17}
\end{figure}

Let us assume, for a contradiction, there exists at least one
intersection between the line segment $\overline{q_{j}q_{j+1}}$
and an edge, say $\overline{V_{k}V_{k+1}}$, of the convex polygon
$V_{1}V_{2}V_{3}...V_{m}$. Then, $q_{j}$ and $q_{j+1}$ must lie on
the opposite sides of the extended line segment
$\overline{V_{j}V_{j+1}}$. But by construction, the processors
$q_{j}$, $q_{j+1}$ lie on the same side of the extended line
segment $\overline{V_{j}V_{j+1}}$. Hence a contradiction and the
claim follows.
\end{proof}

\bibliographystyle{unsrt}
\bibliography{/Bonny/Research/LatexDocs/mybibfile}

%


\end{document}